\pdfoutput=1
\documentclass[letterpaper]{article}
\usepackage{aaai} 
\usepackage{times}
\usepackage{helvet}
\usepackage{paralist}
\usepackage{courier} 
\usepackage{amsmath}
\usepackage{latexsym} 
\usepackage{amsthm}  
\usepackage{thmtools,thm-restate}
\usepackage{tikz}
\usepackage{macros}
\usetikzlibrary{matrix}
\usepackage{amssymb}
                
\declaretheorem{theorem}
\declaretheorem[sibling=theorem]{proposition}

\declaretheorem[style=definition,sibling=theorem,qed=$\diamond$]{definition}

\frenchspacing
\allowdisplaybreaks
\setlength{\pdfpagewidth}{8.5in} 
\setlength{\pdfpageheight}{11in}
\nocopyright
\pdfinfo{
/Title (Ontology Module Extraction via Datalog Reasoning) 
/Author (Ana Armas Romero, Mark Kaminski, Bernardo Cuenca Grau, Ian Horrocks)
}
\setcounter{secnumdepth}{2}  
 \begin{document}
%
\title{Ontology Module Extraction via Datalog Reasoning} 
\author{Ana Armas Romero \and Mark Kaminski \and Bernardo Cuenca Grau \and Ian Horrocks\\
Department of Computer Science, University of Oxford, UK}
\maketitle
\begin{abstract}
Module extraction---the task of computing
a (preferably small) fragment $\M$ of an ontology $\T$ that preserves
entailments over a 
signature $\S$---has found many
applications in recent years. Extracting
modules of minimal size is, however, computationally hard,
and often algorithmically infeasible. Thus, practical techniques
are based on approximations, where $\M$
provably captures the relevant entailments, but is not
guaranteed to be minimal. Existing approximations, however,
ensure that $\M$ preserves all  second-order entailments of $\T$ w.r.t.\ $\S$, 
which is stronger than is required in
many applications, and may lead to 
large modules in practice. 
In this paper we propose
a novel approach in which module extraction is reduced to
a reasoning problem in datalog. Our approach not only generalises
existing approximations in an elegant way, but it
can also be tailored to preserve only specific kinds of entailments,
which allows
us to extract significantly smaller modules. An evaluation on 
widely-used ontologies has shown very encouraging results.
\end{abstract}
 

\section{Introduction}

Module extraction is the task of computing, given an ontology
$\T$ and a signature of interest $\Sigma$, a (preferably small) subset $\M$ of
$\T$ (a module) that preserves all relevant entailments
in $\T$ over the set of symbols $\Sigma$. 
Such an $\M$ is indistinguishable from $\T$ w.r.t.\ $\Sigma$, 
and $\T$ can be safely replaced with $\M$
in applications of $\T$ that use only the symbols in $\Sigma$. 

Module extraction has received a great deal of attention in recent years
\cite{DBLP:series/lncs/5445,GrauHKS08JAIR,DBLP:conf/www/SeidenbergR06,DBLP:journals/ai/KontchakovWZ10,DBLP:conf/dlog/GatensKW14,DBLP:conf/ijcai/VescovoPSS11,DBLP:conf/lpar/NortjeBM13}, 
and modules have found a wide range of applications, including
ontology reuse \cite{GrauHKS08JAIR,DBLP:conf/esws/Jimenez-RuizGSSL08},  matching \cite{DBLP:conf/semweb/Jimenez-RuizG11},
debugging \cite{DBLP:conf/aswc/SuntisrivarapornQJH08,LudwigORE} and
classification \cite{DBLP:conf/semweb/RomeroGH12,DBLP:conf/ore/TsarkovP12,DBLP:journals/jar/GrauHKS10}.
   
The preservation of relevant entailments is formalised via
\emph{inseparability relations}.
The strongest notion is \emph{model} inseparability, which requires
that it must be possible
to turn any
model of $\M$ into a model of $\T$
by (re-)interpreting only the symbols outside $\Sigma$; such an 
$\M$ preserves all second-order
entailments of $\T$ w.r.t.\ $\Sigma$ \cite{DBLP:journals/ai/KonevL0W13}.
A weaker and more flexible notion is \emph{deductive} inseparability,
which requires only that
$\T$ and $\M$ 
entail the same $\Sigma$-formulas \emph{in a given 
query language}.
Unfortunately, the decision problems
associated with module extraction are invariably of high complexity, 
and often undecidable.
For model inseparability, checking whether $\M$ is a $\Sigma$-module
in $\T$ 
is undecidable  even if $\T$ is restricted to be
in the description logic (DL) $\mathcal{EL}$,
for which standard reasoning is tractable.
For deductive inseparability, the problem is typically decidable for lightweight DLs and ``reasonable'' query languages, albeit of high worst-case complexity; e.g., the problem is already
\textsc{ExpTime}-hard for $\mathcal{EL}$ if we consider concept inclusions as
the query language \cite{DBLP:journals/jsc/LutzW10}.
Practical algorithms that ensure minimality of
the extracted modules are known only for 
acyclic $\mathcal{ELI}$ \cite{DBLP:journals/ai/KonevL0W13}
and DL-Lite \cite{DBLP:journals/ai/KontchakovWZ10}.

Practical module extraction techniques are typically based on
sound approximations: they ensure that the extracted fragment
$\M$ is a module (i.e., inseparable from $\T$ w.r.t.\ $\Sigma$),
but they give no minimality guarantee. The most popular
such techniques are based on a family of polynomially checkable
conditions called syntactic 
locality \cite{DBLP:conf/www/GrauHKS07,GrauHKS08JAIR,WhichKindOfModule}; in
particular, $\bot$-locality  and 
$\top\!\bot\!^*$-locality.  
Each locality-based module $\M$ enjoys a number of desirable properties
for applications:
\begin{inparaenum}[\it (i)]
 \item it is model inseparable from $\T$;
 \item it is \emph{depleting}, in the sense that $\T \setminus \M$
 is inseparable from the empty ontology w.r.t.\ $\Sigma$; 
 \item it contains all justifications  (a.k.a.\ explanations) in $\T$
 of every $\Sigma$-formula entailed by $\T$; and
\item last but not least, it can be computed
efficiently, even for very expressive
ontology languages.
\end{inparaenum}

Locality-based techniques are easy to implement, and
surprisingly effective in practice. Their main drawback is that the
extracted modules can be rather large, which limits their 
usefulness in some applications \cite{VescovoKPS0T13}.
One way to address this issue is to develop techniques that more closely approximate minimal modules while still preserving properties \emph{(i)}--\emph{(iii)}.
Efforts in this direction have confirmed that locality-based modules can
be far from optimal in practice \cite{DBLP:conf/dlog/GatensKW14};
however, these techniques
apply only to rather restricted ontology languages and utilise algorithms with high worst-case
\mbox{complexity}.

Another approach to computing smaller modules is to weaken 
properties \emph{(i)}--\emph{(iii)}, which are stronger
than is required in many applications.
In particular, model inseparability (property \emph{(i)}) is
a very strong condition, and
deductive inseparability would usually suffice,
with the query language determining which kinds of 
consequence are preserved; in modular classification, for example,
only atomic concept inclusions need to be preserved.
However, 
all practical module extraction techniques that are applicable
to expressive ontology languages yield modules satisfying
all three properties, and  hence potentially 
much larger than they need to be.

 
In this paper,   
we propose
a technique that reduces module 
extraction to a reasoning problem in datalog.
The connection between module extraction and datalog 
was first observed in \cite{DBLP:conf/esws/Suntisrivaraporn08}, where it was shown that 
locality $\bot$-module extraction for $\mathcal{EL}$ ontologies 
could be reduced to propositional datalog
reasoning.
Our approach takes this connection much farther by
generalising both locality-based and
reachability-based \cite{DBLP:conf/lpar/NortjeBM13} modules
for expressive ontology languages in an elegant way.
A key 
distinguishing feature of our technique is that it
can extract deductively inseparable modules, with the
query language tailored to the requirements of the
application at hand, 
which allows
us to relax Property \emph{(i)} and extract
significantly smaller modules. In all cases our modules preserve
the nice features of locality: they are
widely applicable (even beyond DLs), they
can be efficiently computed,  
they
are depleting  (Property \emph{(ii)}) and they
 preserve all justifications of relevant entailments
(Property \emph{(iii)}). 

We have implemented our approach
using the RDFox datalog engine \cite{DBLP:conf/aaai/MotikNPHO14}. 
Our proof of concept
evaluation shows that
module size consistently decreases as we consider weaker inseparability
relations, which could significantly improve the usefulness of modules in applications.

All our proofs are deferred to the appendix.


\section{Preliminaries}
\label{sec:prelim}

\subsubsection{Ontologies and Queries} We use standard first-order
logic  and assume familiarity with
description logics, ontology languages and theorem proving. 
A signature $\Sigma$ is a set of predicates and
 $\sig{F}$ denotes the signature of a set of formulas $F$.
It is assumed that the nullary falsehood predicate $\bot$ belongs to every $\S$. 
To capture a wide range of KR languages, we formalise
ontology axioms as  \emph{rules}: function-free sentences 
of the form
$\forall \x.[\fml(\x)
\rightarrow\exists\y.[\bigvee_{i=1}^n\fmm_i(\x,\y)]]$,
where $\fml$, $\fmm_i$ are conjunctions of distinct atoms. 
Formula $\fml$ is the rule \emph{body}  and $\exists
\y.[\bigvee_{i=1}^n\fmm_i(\x,\y)]$ is the \emph{head}. 
Universal quantification is omitted for brevity.
Rules are required to be
safe (all variables in the head occur in the body) and
we assume
w.l.o.g.\ that $\top$ (resp.\ $\bot$) does not occur in rule heads 
(resp.\ in rule bodies). A TBox
$\T$ is a finite set of rules; 
TBoxes mentioning equality $(\approx)$ are extended
with its standard axiomatisation.
A fact $\fct$ is a function-free ground atom. 
An ABox $\abox$ is a finite set of facts.
A \emph{positive existential query (PEQ)} is a formula $q(\x) = \exists \y.
\fml(\x,\y)$, where $\fml$ is built from function-free atoms using only $\land$ and~$\lor$.

\subsubsection{Datalog}
A rule is \emph{datalog} if its head has at most one atom
and all variables are universally quantified.
A \emph{datalog program} $\prog$ is a set of datalog rules.
Given $\prog$ and an ABox $\abox$, their
\emph{materialisation} is the
set of facts entailed by $\prog \cup \abox$, which
can be computed by means of forward-chaining. A fact $\fct$ is a consequence of a datalog rule 
$r=\bigwedge_{i=1}^n\fct'_i\to\fcu$ and facts
$\fct_1,\dots,\fct_n$ if $\fct = \fcu\s$  with $\s$
a most-general unifier (MGU) of $\fct_i,\fct'_i$ for each $1 \leq i \leq n$.
A (forward-chaining) \emph{proof}  
of $\fct$ in $\prog \cup \abox$  is a pair $\rho =
(T, \lambda)$ where $T$ is a tree, $\lambda$ is a mapping from 
nodes in $T$ to facts,
and from edges in $T$ to rules in $\prog$,
such that for each node $v$
the following holds:
\begin{inparaenum}
\item $\lambda(v) = \fct$ if $v$ is the root of $T$;
\item $\lambda(v) \in
\abox$ if $v$ is a leaf;
and  
\item if $v$ has children
$w_1,\dots,w_n$ then each edge from $v$
to $w_i$ is labelled by the same rule $r$ and
$\lambda(v)$ is a consequence of $r$
and $\lambda(w_1), \dots, \lambda(w_n)$.
\end{inparaenum}
Forward-chaining is sound and complete:
a fact $\fct$ is in the materialisation 
of $\prog\cup\abox$ iff it has a proof in
$\prog \cup \abox$.
Finally, the \emph{support} of  $\fct$ is
the set of rules occurring in some proof of
$\fct$ in $\prog \cup \abox$.

\subsubsection{Inseparability Relations \& Modules}
We next recapitulate the most common 
inseparability relations studied in the literature.
We say that 
TBoxes $\T$ and 
$\T'$ are 
\begin{itemize}
  \item \emph{$\S$-model inseparable} ($\T\eqm_{\S}\T'$), 
if for every model $\calI$ of $\T$ (resp.\ of $\T'$) 
there exists a model $\calJ$ of $\T'$ (resp.\ of $\T$) with the same domain
s.t.\ $\A^{\calI}=\A^{\calJ}$ for each $\A\in\S$.
 \item \emph{$\S$-query inseparable} ($\T\eqq_\S\T'$) if for every
    Boolean PEQ $q$ and $\S$-ABox $\abox$ we have
    $\T\cup\abox\models q$ iff $\T'\cup\abox\models q$.
 \item \emph{$\S$-fact inseparable} ($\T\eqf_{\S}\T'$) if for every
    fact $\fct$ and ABox $\abox$ over $\S$ we have $\T\cup\abox
    \models \fct$ iff $\T'\cup \abox\models \fct$.\qedhere
 \item \emph{$\S$-implication inseparable} ($\T\eqi_{\S}\T'$) if for
 each $\varphi$ of the form $\A(\x)\rightarrow\B(\x)$ with 
 $\A,\B\in\S$,  \hbox{$\T\models
    \varphi$} iff $\T'\models
  \varphi$.
\end{itemize}

These relations are naturally ordered
from strongest to weakest:
${\eqm_{\S}}\subsetneq{\eqq_{\S}}\subsetneq{\eqf_{\S}}\subsetneq{\eqi_{\S}}$
for each
non-trivial $\Sigma$.

Given an inseparability relation $\equiv$ for $\S$, 
a subset $\M\subseteq \T$ is a \emph{$\equiv$-module of\/
$\T$} if $\T\equiv \M$. Furthermore, $\M$ is \emph{minimal}
if no $\M' \subsetneq \M$ is a $\equiv$-module of\/
$\T$.


\section{Module Extraction via Datalog Reasoning} \label{sec:module-settings}

In this section, we present our approach to module extraction
by reduction into a reasoning problem in datalog. 
Our approach builds on recent techniques
that exploit datalog engines for 
ontology reasoning 
\cite{DBLP:conf/ijcai/KontchakovLTWZ11,stefanoni2013introducing,DBLP:conf/aaai/ZhouNGH14}.
In what follows, we fix an arbitrary
TBox $\T$ and signature $\S\subseteq\sig{\T}$. Unless otherwise stated,
our definitions and theorems are parameterised
by such $\T$ and $\S$. We assume
w.l.o.g.\ that rules in $\T$ do not
share existentially quantified variables.
For simplicity, we also 
assume that $\T$ contains no constants 
(all our results can be
seamlessly extended). 

\subsection{Overview and Main Intuitions}\label{sec:overview}

Our overall strategy to
extract a module $\M$  of $\T$ 
for an inseparability relation $\equiv^z_{\S}$,
with $z \in \set{\mathsf{m},\mathsf{q},\mathsf{f},\mathsf{i}}$, 
can be summarised by the following steps:
\begin{enumerate}
  \item Pick a substitution $\theta$ mapping all existentially 
  quantified variables in $\T$ to 
  constants, and transform $\T$ into a datalog program $\prog$ by 
  \begin{inparaenum}[\it(i)] 
  \item Skolemising 
all rules in $\T$ using $\theta$ and
\item turning disjunctions into conjunctions while splitting them into different rules,
thus replacing each function-free disjunctive rule of the form 
$\fml(\x) \rightarrow \bigvee_{i=1}^n\fmm_i(\x)$
with datalog rules 
$\fml(\x) \rightarrow \fmm_1(\x), \dots, \fml(\x) \rightarrow \fmm_n(\x)$.
  \end{inparaenum}

 \item Pick a $\Sigma$-ABox 
$\abox_0$ and materialise
$\prog \cup \abox_0$.
 \item Pick a set $\arel$ of ``relevant facts'' in the materialisation
 and compute the supporting rules in $\prog$ for
 each such fact.
\item 
The module $\M$ consists of all rules in $\T$ that
yield some supporting rule in $\prog$. In this way,
$\M$ is fully determined
by the substitution $\theta$ and the ABoxes $\abox_0$ and $\arel$.
\end{enumerate}
The main intuition behind our module extraction approach is that we
can pick $\theta$, $\abox_0$ and $\arel$ (and hence $\M$) such 
that each proof $\rho$ of a $\S$-consequence $\varphi$ of $\T$ to
be preserved can be embedded in a forward chaining proof $\rho'$
in $\prog \cup \abox_0$
of a relevant fact in $\arel$.
Such an embedding satisfies the key property that, for each rule $r$
involved in $\rho$, at least one corresponding datalog rule in $\prog$
is involved in $\rho'$. In this way we ensure that $\M$ contains the
necessary rules to entail $\varphi$.
This approach, however, does not
ensure minimality of $\M$: since 
$\prog$ is a strengthening of $\T$ there may be
proofs of a relevant fact in $\prog \cup \abox_0$ that do not
correspond to a $\S$-consequence of $\T$, which may lead to
unnecessary rules in~$\M$.

\begin{figure}
\[
{\small
\begin{array}{rl|r@{\:\sby\:}l}
(\ax_1) 	& \A(x) \rightarrow \exists y_1. [\roleR(x,y_1) \wedge \B(y_1)] 		& \A&\exists \roleR. \B \\
(\ax_2) 	& \A(x) \rightarrow \exists y_2. [\roleR(x,y_2) \wedge \C(y_2)] 		& \A&\exists \roleR. \C \\
(\ax_3) 	& \B(x)\wedge\C(x) \rightarrow \D(x) 											& \B\sqcap\C&\D \\
(\ax_4) 	& \D(x) \rightarrow \exists y_3. [\roleS(x,y_3) \wedge \E(y_3)] 		& \D&\exists \roleS. \E \\
(\ax_5) 	& \D(x) \wedge \roleS(x,y) \rightarrow \F(y) 									& \D&\forall \roleS. \F \\
(\ax_6) 	& \roleS(x,y) \wedge \E(y) \wedge \F(y) \rightarrow \G(x) 				& \exists \roleS. (\E\sqcap\F)&\G \\
(\ax_7) 	& \G(x) \wedge \H(x) \rightarrow \bot		& \G \sqcap \H& \bot 
\end{array}}\]
\caption{Example TBox $\Tex$ with DL translation}
\label{fig:exampleTBox}
\end{figure}

\begin{figure*}[t]
    \begin{small}
      \begin{tikzpicture}[parent anchor=south,sibling distance=15mm,level distance=10mm,>=stealth]
        \tikzstyle{level 2}=[sibling distance=18mm]
        \tikzstyle{level 3}=[sibling distance=15mm]
        \hskip20mm
        \node (top) {}
        child[grow=180,level distance=40mm] {node (leftTree) {$\G(a)$}
          [grow=-90,level distance=10mm]
          child {node (Safa1) {$\roleS(a,f(a))$}
            child {node {$\D(a)$}
              edge from parent node[right=-0.3mm] {$\ax'_4$}}}
          child[grow=-115,level distance=11mm]
          {node (Efa) {$\hskip5pt\E(f(a))$}
            child[grow=-90,level distance=10mm] {node {$\D(a)$}
              edge from parent node[right=-0.3mm] {$\ax''_4$}}
            edge from parent node[right=0mm] {$\ax_6$}}
          child {node (Ffa) {$\F(f(a))$}
            child {node (Safa2) {$\roleS(a,f(a))$}
              child {node {$\D(a)$}
                edge from parent node[right=-0.3mm] {$\ax'_4$}}}
            child {node {$\D(a)$}}}
          edge from parent[draw=none]}
        child[grow=0,level distance=40mm] {node (rightTree) {$\G(a)$}
          [grow=-90,level distance=10mm]
          child {node (Sac1) {$\roleS(a,c)$}
            child {node {$\D(a)$}}}
          child[grow=-115,level distance=11mm] {node (Ec) {$\hskip0pt\E(c)$}
            child[grow=-90,level distance=10mm] {node {$\D(a)$}}}
          child {node (Fc) {$\F(c)$}
            child {node (Sac2) {$\roleS(a,c)$}
              child {node {$\D(a)$}}}
            child {node {$\D(a)$}}}
          edge from parent[draw=none]};
        \node[below of=Ffa,node distance=5.7mm] {$\ax_5$};
        \node[below of=leftTree,node distance=35mm] {(a)};
        \node[below of=rightTree,node distance=35mm] {(b)};
        \draw[->,dotted,shorten <=-5pt,shorten >=-15pt] (Safa1.25) .. controls +(3,1) and +(-3,1) .. (Sac1.140) node[midway,below] {$\theta$};
        \draw[->,dotted,shorten <=-5pt,shorten >=-12pt] (Ffa.-35) .. controls +(3,-1) and +(-3,-1) .. (Fc.-140) node[near start,above] {$\theta$};
        \draw[->,dotted,shorten <=-5pt,shorten >=-12pt] (Safa2.-25) .. controls +(3,-1) and +(-3,-1) .. (Sac2.-125) node[midway,above] {$\theta$};
        \draw[->,dotted,shorten <=-5pt,shorten >=-12pt] (Efa.-35) .. controls +(3,-1) and +(-3,-1) .. (Ec.-140) node[midway,below] {$\theta$};
        \end{tikzpicture}
      \end{small}
  \caption{Proofs of $\G(a)$ from $\D(a)$ in (a) $\Tex$ and (b) the corresponding datalog program}
  \label{fig:proof-trees}
\end{figure*}


To illustrate how our strategy might work in practice, 
suppose that $\T$ is $\Tex$
in Fig.\ \ref{fig:exampleTBox}, 
$\S =\set{\B,\C,\D,\G}$, and that
we want a module $\M$  that
is $\S$-implication inseparable from $\Tex$. This is a
 simple case 
since $\varphi=\D(x)\to\G(x)$ is
the only non-trivial $\S$-implication
entailed
by $\Tex$; thus, for $\M$ to be a module
we only require that $\M \models \varphi$.

Proving $\Tex \models \varphi$ 
amounts to proving $\Tex \cup \{\D(a)\} \models \G(a)$ (with $a$ a fresh constant).
Figure~\ref{fig:proof-trees}(a) depicts a hyper-resolution tree 
$\rho$ showing how $\G(a)$ can be derived
from the clauses corresponding to $\ax_4$--$\ax_6$ and $\D(a)$, with
rule $\ax_4$ transformed into clauses
\begin{align*}
\ax_4' = \D(x) \to \roleS(x,f(x_3)) & &  \ax_4'' = \D(x) \to \E(f(x_3))
\end{align*}
Hence $\M = \{\ax_4$--$\ax_6\}$ is a $\S$-implication 
inseparable module of $\Tex$, and as 
$\G(a)$ cannot be derived
from any subset of $\{\ax_4$--$\ax_6\}$, $\M$ is also minimal.

In our approach, we pick
$\abox_0$ to contain the initial fact $\D(a)$, $\arel$ to contain
the fact to be proved $\G(a)$, and we make $\theta$
map variable $y_3$ in $\ax_4$
to a fresh constant $c$, in which case rule
$\ax_4$ corresponds to the following datalog rules in $\prog$:
\begin{align*}
\D(x) \to \roleS(x,c) && \D(x) \to \E(c)
\end{align*}
Figure~\ref{fig:proof-trees}(b) depicts a forward chaining
proof $\rho'$ of $\G(a)$ in 
$\prog \cup \{\D(a)\}$.
As shown in the figure, $\rho$ can be embedded in $\rho'$ via $\theta$ by
mapping functional terms over $f$ 
to the fresh constant $c$. In this way, the rules
involved in $\rho$ are mapped to the datalog rules involved in $\rho'$
via $\theta$. Consequently, we will extract the (minimal) module $\M = \{\ax_4$--$\ax_6\}$. 

\subsection{The Notion of Module Setting}

The substitution $\theta$ and the ABoxes $\abox_0$
and $\arel$, which determine the extracted module, can be chosen
in different ways to ensure the preservation of different kinds of $\Sigma$-consequences.
The following notion of
a \bla{} captures in a declarative way the main elements of
our approach.

\begin{definition}
\label{def:bla}
A \emph{\bla} for $\T$ and $\S$ is a tuple 
$\upchi = \langle \theta, \abox_0, \arel \rangle$ 
with
$\theta$ a substitution from existentially quantified
variables in $\T$ to constants,
$\abox_0$ a $\Sigma$-ABox,
$\arel$ a \hbox{$\sig{\T}$}-ABox, and s.t.\
no constant in $\upchi$ occurs in $\T$.

The \emph{program} of $\upchi$ is the smallest datalog program
$\prog^{\chi}$ containing, for each
$\ax=\fml(\x)\to\exists\y.[\bigvee_{i=1}^n\fmm_i(\x,\y)]$ in $\T$, the
rule $\fml \rightarrow \bot$ if $n = 0$ and all rules $\fml \to
\fct\theta$ for each $1 \leq i \leq n$ and each atom $\fct$ in
$\psi_i$.  The \emph{support} of $\upchi$ is the set of rules $r \in
\prog^{\chi}$ that support a fact from $\arel$ in $\prog^{\chi} \cup
\abox_0$.  The \emph{module} $\M^{\chi}$ of $\upchi$ is the set of
rules in $\T$ that have a corresponding datalog rule in the support of
$\upchi$.
\end{definition}

\subsection{Modules for each Inseparability Relation}\label{sec:all-modules}

We next consider each
inseparability relation  $\equiv^z_{\S}$, where
$z \in \set{\mathsf{m},\mathsf{q},\mathsf{f},\mathsf{i}}$, and
formulate a specific setting $\upchi_z$
which provably yields
a $\equiv^z_{\S}$-module of $\T$. 

\subsubsection{Implication Inseparability}

 
The example in Section~\ref{sec:overview}
suggests a natural setting $\upchiimp=\langle \theta, \abox_0, \arel \rangle$ 
that guarantees implication inseparability. 
As in our example, we pick 
$\theta$ to be as ``general'' as possible by
Skolemising each existentially quantified variable to 
a fresh constant.
For $\A$ and $\B$ predicates of the same arity $n$,
proving that $\T$ entails a $\S$-implication 
$\varphi=\A(x_1, \ldots, x_n)\to\B(x_1, \ldots, x_n)$, 
amounts
to showing that $\T \cup \{\A(a_1, \ldots, a_n)\} \models \B(a_1, \ldots, a_n)$
for fresh constants $a_1, \ldots, a_n$. Thus, following the ideas of our example,
we 
initialise $\abox_0$ with 
a fact  $\A(c_{\A}^1, \ldots, c_{\A}^n)$ for each
$n$-ary predicate $\A \in \S$, and $\arel$ with
a fact 
$\B(c_{\A}^1, \ldots, c_{\A}^n)$
for each pair of $n$-ary predicates $\{\B,\A\} \subseteq \S$ with 
$\B \neq \A$.

\begin{definition} \label{def:chiimp} %
For each existentially quantified variable $y_j$ in $\T$, let $c_{y_j}$
be a fresh constant. Furthermore, 
for each $\A\in\S$ of arity $n$, let $c_{\A}^1, \ldots, c_{\A}^n$ be 
also fresh constants.  The setting
  $\upchiimp=\langle \theta^{\mathsf{i}}, \abox_0^{\mathsf{i}}, \arel^{\mathsf{i}} \rangle$  is defined as follows:
  \begin{itemize}
  \item $\theta^{\mathsf{i}}=\mset{y_j\mapsto c_{y_j}}{y_j \text{ existentially quantified in }
      \T}$,
  \item $\abox_0^{\mathsf{i}}=\{\A(c_{\A}^1, \ldots, c_{\A}^n) \mid \A \text{~$n$-ary predicate in } \S\}$, and
  \item $\arel^{\mathsf{i}}\,{=}\,\{\B(c_{\A}^1, \ldots, c_{\A}^n)\,{\mid}\,
    \A \neq \B\, \text{~$n$-ary predicates in } \S \}$.\qedhere 
  \end{itemize}  
\end{definition}
%
%
The setting
$\upchiimp$ is reminiscent of the datalog encodings 
typically used  to check whether a
concept  $A$ is subsumed by concept $B$
w.r.t.\  a  ``lightweight'' 
ontology $\T$ \cite{DBLP:conf/semweb/KrotzschRH08,stefanoni2013introducing}.
There, variables in rules are Skolemised as
fresh constants to produce a datalog program $\prog$
and it is then checked whether $\prog \cup \{A(a)\} \models B(a)$.
 
\begin{restatable}{theorem}{datalogAtomicSubsumptionModules}
\label{thm:datalogAtomicSubsumptionModules}
$\M^{\chiimp}\eqi_{\S} \T$.  
\end{restatable}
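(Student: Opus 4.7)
The plan is to verify both directions of the inseparability relation $\eqi_\S$. Since $\M^{\chiimp}\subseteq\T$, for every $\S$-implication $\varphi$ monotonicity of first-order entailment immediately gives $\M^{\chiimp}\models\varphi\Rightarrow\T\models\varphi$, so the substantive content is the converse.

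Suppose $\T\models\varphi$ with $\varphi=A(\x)\to B(\x)$ for $A,B\in\S$ of common arity $n$, and let $\vec{c}_A=(c_A^1,\dots,c_A^n)$ be the fresh tuple placed in $\abox_0^{\mathsf{i}}$. This is equivalent to $\T\cup\{A(\vec{c}_A)\}\models B(\vec{c}_A)$, and by completeness of first-order logic there is a finite proof $\pi$ of $B(\vec{c}_A)$ using only some subset $\T_\pi\subseteq\T$; in particular $\T_\pi\models\varphi$. The key step is to transform $\pi$ into a forward-chaining proof $\pi'$ of $B(\vec{c}_A)$ in $\prog^{\chiimp}\cup\{A(\vec{c}_A)\}$ by applying $\theta^{\mathsf{i}}$ pointwise: each Skolem term $f_{y_j}(\cdots)$ introduced when Skolemizing an existential variable $y_j$ collapses to the constant $c_{y_j}$, and each application of a disjunctive/existential rule $\alpha\in\T_\pi$ becomes a family of applications of its split, Skolem-constant counterparts in $\prog^{\chiimp}$. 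Because such identifications only admit more unifiers and forward chaining is monotone, $\pi'$ is a well-formed datalog proof, and the datalog rules it uses are precisely the $\theta^{\mathsf{i}}$-images of the rules in $\T_\pi$; the paper's standing assumption that rules in $\T$ do not share existentially quantified variables ensures different rules yield disjoint Skolem constants, preventing unintended interference.

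Since $A(\vec{c}_A)\in\abox_0^{\mathsf{i}}$ and $B(\vec{c}_A)\in\arel^{\mathsf{i}}$ (because $A\ne B$), every datalog rule in $\pi'$ supports a fact of $\arel^{\mathsf{i}}$ from $\abox_0^{\mathsf{i}}$ in $\prog^{\chiimp}$, and so lies in the support of $\upchiimp$. By Definition~\ref{def:bla}, the originating rules---which are exactly the rules of $\T_\pi$---all belong to $\M^{\chiimp}$, yielding $\T_\pi\subseteq\M^{\chiimp}$ and hence $\M^{\chiimp}\models\varphi$ by monotonicity. The main obstacle is justifying the embedding step rigorously: that every first-order proof in $\T$ descends, under $\theta^{\mathsf{i}}$, to a forward-chaining proof in $\prog^{\chiimp}$ whose rules are precisely the $\theta^{\mathsf{i}}$-images of those used upstairs. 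This is the intuition captured in Figure~\ref{fig:proof-trees}, but a clean proof will likely fix a concrete first-order calculus (e.g.\ hyper-resolution with Skolemization) whose inference steps align with forward chaining, and abstract the argument into a general lifting lemma reusable across the settings $\upchi_{\mathsf{f}}$, $\upchi_{\mathsf{q}}$, $\upchi_{\mathsf{m}}$.
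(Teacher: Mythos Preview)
Your proposal is correct and essentially mirrors the paper's approach: the paper fixes hyperresolution over the Skolemised CNF of $\T$, proves a general lifting lemma that embeds any hyperresolution proof in $\cnf(\T)\cup\abox$ into a forward-chaining proof in $\prog^{\chi}\cup\abox_0$ via the map $\Gamma_\theta$ (collapsing Skolem terms to the constants prescribed by $\theta$) composed with a homomorphism $\nu$ of $\abox$ into $\abox_0$, and then packages this into a reusable proposition whose hypotheses are verified separately for each setting $\upchi_z$. Your anticipation of this structure---the choice of hyperresolution, the pointwise collapse of Skolem terms, and the abstraction into a lifting lemma reusable across $\upchi_{\mathsf{f}},\upchi_{\mathsf{q}},\upchi_{\mathsf{c}}$---is exactly what the paper does.
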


\subsubsection{Fact Inseparability}

The setting $\upchiimp$ in
Def.~\ref{def:chiimp} cannot be used to ensure fact inseparability.
Consider again $\Tex$ and $\S = \set{\B,\C, \D,\G}$, for which
$\M^{\chiimp} = \{\ax_4, \ax_5, \ax_6\}$. For $\abox =
\set{\B(a),\C(a)}$ we have $\Tex \cup \abox \models \G(a)$ but
$\M^{\chiimp} \cup \abox \not\models \G(a)$, and hence $\M^{\chiimp}$
is not fact inseparable from $\Tex$.

More generally, 
$\M^{\chiimp}$ is only guaranteed to preserve
$\S$-fact entailments \mbox{$\T\cup\abox\models\fct$} where 
$\abox$ is a singleton. However, for a module to be fact inseparable
from $\T$ it must preserve all $\S$-facts when coupled with
\emph{any} $\S$-ABox.   
 We achieve this by choosing 
 $\abox_0$ to be the \emph{critical ABox}
for~$\S$, which consists of all facts that can be
constructed using $\S$ and a single fresh constant
\cite{DBLP:conf/pods/Marnette09}. Every $\S$-ABox
can be homomorphically mapped into the critical
$\S$-ABox. 
In this way, we can show
that all proofs of a $\S$-fact in $\T \cup \abox$ can 
be embedded in a proof of a relevant fact in $\prog^{\chi} \cup \abox_0$.

\begin{definition} \label{def:chidd} %
  Let constants $c_{y_i}$ be as in Def.\ \ref{def:chiimp},
  and let $*$ be a fresh constant. The setting
  $\upchidd=\langle \theta^{\mathsf{f}}, \abox_0^{\mathsf{f}}, \arel^{\mathsf{f}} \rangle$ is defined as follows:
  \begin{inparaenum}[\it (i)]
  \item $\theta^{\mathsf{f}} = \theta^{\mathsf{i}}$,
  \item $\abox_0^{\mathsf{f}} = \mset{\A(\ast,\ldots,\ast)}{\A\in\S}$, and
  \item $\arel^{\mathsf{f}} = \abox_0^{\mathsf{f}}$\qedhere
  \end{inparaenum}
\end{definition}
The datalog programs for $\upchiimp$ and  $\upchidd$ coincide
and hence the only difference between the two settings is in the definition of
their corresponding ABoxes.  In our example, both
$\abox_0^{\mathsf{f}}$ and $\arel^{\mathsf{f}}$ contain facts
$\B(\ast)$, $\C(\ast)$, $\D(\ast)$, and $\G(\ast)$. Clearly,
$\prog^{\chidd} \cup \abox_0 \models \G(\ast)$ and the 
proof additionally involves rule $\ax_3$. Thus $\M^{\chidd} = \set{\ax_3,\ax_4,\ax_5,\ax_6}$.

\begin{restatable}{theorem}{disjDatalogModules}
\label{thm:disjDatalogModules}
$\M^{\chidd}\eqf_{\S}\T$.
\end{restatable}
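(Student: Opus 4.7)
The inclusion $\M^{\chidd}\subseteq\T$ gives the direction $\M^{\chidd}\cup\abox\models\fct\Rightarrow\T\cup\abox\models\fct$ by monotonicity, so the plan is to establish the converse: for every $\S$-fact $\fct$ and every $\S$-ABox $\abox$ with $\T\cup\abox\models\fct$, one has $\M^{\chidd}\cup\abox\models\fct$. The overall strategy is to take any proof of $\fct$ from $\T\cup\abox$ and embed it into a forward-chaining proof of a fact in $\arel^{\mathsf{f}}$ within $\prog^{\chidd}\cup\abox_0^{\mathsf{f}}$, so that every TBox rule used in the original proof is forced to belong to $\M^{\chidd}$.

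First I would fix a hyper-resolution proof $\pi$ witnessing $\T^{\mathsf{sk}}\cup\abox\models\fct$, where $\T^{\mathsf{sk}}$ is a Skolemisation of $\T$. Next, invoking the critical ABox construction, I would define the homomorphism $h$ collapsing every constant in $\abox$ and $\fct$ to the fresh constant $\ast$. Because $\abox$ uses only $\S$-predicates, $h(\abox)\subseteq\abox_0^{\mathsf{f}}$; and because $\fct$ is a $\S$-fact, $h(\fct)\in\arel^{\mathsf{f}}$. Applying $h$ pointwise to $\pi$ yields a proof $h(\pi)$ of $h(\fct)$ from $\T^{\mathsf{sk}}\cup\abox_0^{\mathsf{f}}$ that uses exactly the same TBox rules as $\pi$, since $h$ leaves $\T^{\mathsf{sk}}$ untouched and substitutions commute with resolution steps.

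The next step is to convert $h(\pi)$ into a forward-chaining proof in $\prog^{\chidd}\cup\abox_0^{\mathsf{f}}$. I would compose $h$ with the auxiliary substitution $\sigma$ sending every Skolem term $f_{y_j}(\bar{t})$ to the constant $c_{y_j}=y_j\theta^{\mathsf{f}}$. Because $\prog^{\chidd}$ is obtained from $\T$ precisely by Skolemising existential variables to these constants via $\theta^{\mathsf{f}}$ and by splitting disjunctive heads across datalog rules, every resolution inference in $h(\pi)$ maps to one or more datalog forward-chaining steps whose rules lie in $\prog^{\chidd}$. The resulting datalog proof concludes with $h(\fct)\in\arel^{\mathsf{f}}$, so each rule it uses belongs to the support of $\upchidd$ by Definition~\ref{def:bla}, and the corresponding TBox rule belongs to $\M^{\chidd}$. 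Since these are precisely the TBox rules appearing in $\pi$, the same $\pi$ is already a proof from $\M^{\chidd}\cup\abox$, giving $\M^{\chidd}\cup\abox\models\fct$.

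The main obstacle will be the treatment of disjunctive heads in $\T$: a hyper-resolution step on a disjunctive rule produces a disjunctive intermediate clause that is only eliminated further down in $\pi$, whereas the datalog embedding picks one datalog rule per application and never produces disjunctions. A careful induction on $\pi$ is required to argue that for each disjunctive rule instance used in $\pi$, at least one of its datalog siblings participates in some forward-chaining proof of a fact from $\arel^{\mathsf{f}}$, which is what guarantees that the corresponding TBox rule ends up in the support of $\upchidd$ as required.
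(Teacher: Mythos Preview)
Your proposal is correct and follows essentially the same approach as the paper: embed a hyperresolution proof over the Skolemised TBox into a forward-chaining proof in $\prog^{\chidd}\cup\abox_0^{\mathsf{f}}$ via the collapsing map to $\ast$ composed with $\theta^{\mathsf{f}}$, and conclude that every TBox rule used lies in $\M^{\chidd}$. The paper organises this slightly differently---it first rephrases fact inseparability as preservation of all $\S$-datalog rules (Proposition~\ref{prop:fq-insep-charact-rules}), then applies a general consequence-preservation lemma (Proposition~\ref{prop:generalConsequencePreservation}), whose proof in turn rests on the embedding induction you sketch (Lemma~\ref{lemma:HyperresolutionProofsToDatalogProofs})---but the content of the argument is the same, and the ``main obstacle'' you flag (disjunctive heads forcing different datalog siblings to support possibly different relevant facts, including $\bot$) is exactly what that lemma handles by its Property~2.
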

%

\subsubsection{Query Inseparability}
Positive existential queries constitute a much richer query 
language than facts as they
allow for existentially quantified variables. 
Thus, the
query inseparability requirement  invariably 
leads to larger modules.

For instance, let $\T=\Tex$ and
$\S=\set{\A,\B}$. Given the $\S$-ABox
$\abox = \{\A(a)\}$ and $\S$-query $q = \exists y.\B(y)$ we have that
$\Tex \cup \abox \models q$ (due to rule $\ax_1$). 
The module
$\M^{\chidd}$ is, however, empty. Indeed, the materialisation 
of $\prog^{\chidd} \cup \{\A(\ast)\}$ consists of the additional facts
$\roleR(\ast,c_{y_1})$ and $\B(c_{y_1})$ and hence it does not
contain any relevant fact mentioning only $\ast$.
Thus, 
$\M^{\chidd} \cup \abox \not\models q$ and $\M^{\chidd}$ is not
query inseparable from $\Tex$.

Our example suggests that, although the critical ABox
is constrained enough to embed every $\S$-ABox,  we may need
to consider additional relevant facts to capture all
proofs of a $\S$-query. 
In particular, rule $\ax_1$ implies that $\B$ contains an 
instance
whenever $\A$ does: a dependency that is then checked by $q$.
This can be captured by considering fact $\B(c_{y_1})$
as relevant, in which case rule $\ax_1$ would be in the module.

More generally, we consider a module setting $\upchi$ 
that differs from $\upchidd$
only in that all $\S$-facts (and not just those over 
$\ast$) are considered
as relevant. 
\begin{definition} \label{def:chicq} %
  Let constants $c_{y_i}$ and $\ast$ be as in Def.~\ref{def:chidd}. 
  The setting $\upchicq=\langle \theta^{\mathsf{q}}, \abox_0^{\mathsf{q}}, \arel^{\mathsf{q}} \rangle$ is 
  as follows:
  \begin{inparaenum}[\it (i)]
  \item $\theta^{\mathsf{q}}= \theta^{\mathsf{f}}$,
  \item $\abox_0^{\mathsf{q}} = \abox_0^{\mathsf{f}}$, and 
  \item $\arel^{\mathsf{q}}$ consists of all $\S$-facts $\A(a_1, \ldots, a_n)$
  with each $a_j$ either a constant $c_{y_i}$ or $\ast$.\qedhere
  \end{inparaenum}
\end{definition}
Correctness is established by the following theorem:

\begin{restatable}{theorem}{datalogCQModules} \label{thm:datalogCQModules}
  $\M^{\chicq}\eqq_{\S}\T$.
\end{restatable}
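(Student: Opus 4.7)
The plan is to establish both directions of query inseparability. The ``only if'' direction is immediate: since $\M^{\chicq} \subseteq \T$, any model of $\T \cup \abox$ is also a model of $\M^{\chicq} \cup \abox$, so for every Boolean PEQ $q$ and $\S$-ABox $\abox$, $\M^{\chicq} \cup \abox \models q$ implies $\T \cup \abox \models q$.

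For the ``if'' direction, fix an $\S$-ABox $\abox$ and a Boolean PEQ $q$ with $\T \cup \abox \models q$. Since every PEQ is equivalent to a union of CQs, I may assume $q$ is a single CQ $\exists \vec{y}.\bigwedge_j A_j(\vec{z}_j)$ with each $A_j \in \S$. The goal is to show that every rule of $\T$ needed to prove $q$ from $\abox$ is already in $\M^{\chicq}$; then the original proof of $q$ from $\T\cup\abox$ is also a proof from $\M^{\chicq}\cup\abox$. To this end I would Skolemize $\T$ (introducing a Skolem function $f_{y_i}$ for each existentially quantified variable $y_i$) and clausify, replacing each disjunctive head by one clause per disjunct and per head atom, and invoke completeness of hyperresolution (with splitting, to accommodate disjunction and certain-answer entailment) to obtain a proof structure that establishes $q$ in every model of $\T \cup \abox$.

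The key technical step is to exhibit a homomorphism from this proof structure into forward-chaining proofs in $\prog^{\chicq} \cup \abox_0^{\mathsf{q}}$ of facts in $\arel^{\mathsf{q}}$. Concretely, I would define the substitution $\eta$ on the Herbrand universe sending every constant appearing in $\abox$ to $\ast$ and every Skolem term $f_{y_i}(\vec{t})$ to $c_{y_i}$. Applying $\eta$ pointwise to each branch of the hyperresolution proof turns it into a forward-chaining proof in $\prog^{\chicq}\cup\abox_0^{\mathsf{q}}$: the leaves are facts of $\abox$, which $\eta$ maps into $\abox_0^{\mathsf{q}}$ since $\abox$ is a $\S$-ABox and $\abox_0^{\mathsf{q}}$ is the critical $\S$-ABox; and each Skolemized clausal instance $\fml(\vec{x}) \to \fct(\vec{x}, f_{y_i}(\vec{x}))$ of a rule $\ax\in\T$ becomes, after applying $\eta$, precisely a datalog rule $\fml(\vec{x}) \to \fct\theta^{\mathsf{q}}$ that lies in $\prog^{\chicq}$ by Definition~\ref{def:bla}. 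Because the $\eta$-image of each derived atom uses an $\S$-predicate and only the constants $\ast$ and $c_{y_i}$, it belongs to $\arel^{\mathsf{q}}$ by Definition~\ref{def:chicq}; hence every datalog rule used is in the support of $\upchicq$, and each originating rule of $\T$ is in $\M^{\chicq}$. As the proof of $q$ from $\T \cup \abox$ only invokes rules in $\M^{\chicq}$, it is also a proof from $\M^{\chicq}\cup\abox$, yielding $\M^{\chicq}\cup\abox\models q$.

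The main obstacle will be handling disjunctive heads together with existentials cleanly: a certain-answer proof of $q$ from $\T\cup\abox$ is not a single hyperresolution tree but a case-splitting structure, one branch per ``disjunctive choice''. Since $\prog^{\chicq}$ replaces each disjunctive head by the conjunction of all its disjuncts, it strengthens $\T$, so every such branch embeds smoothly into the datalog setting; still, making precise that on every branch the $\eta$-image witnesses a fact of $\arel^{\mathsf{q}}$ and that no branch forces rules outside $\M^{\chicq}$ requires careful bookkeeping. The existential side is also delicate because $q$ may contain existentially quantified variables that are matched by Skolem terms in the witness; however, the definition of $\arel^{\mathsf{q}}$ as all $\S$-facts built from $\ast$ and the $c_{y_i}$ is designed precisely so that any such witness atom, once projected by $\eta$, lands in $\arel^{\mathsf{q}}$. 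Once these points are settled, the remainder of the argument reduces to routine verification that homomorphic images of hyperresolution steps are forward-chaining steps in $\prog^{\chicq}$.
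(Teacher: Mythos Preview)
Your proposal is essentially correct and follows the same core idea as the paper---embed a refutation/derivation for the $\Sigma$-consequence into a forward-chaining proof in $\prog^{\chicq}\cup\abox_0^{\mathsf{q}}$ via the term homomorphism that sends ABox constants to $\ast$ and Skolem terms $f_{y_i}(\cdot)$ to $c_{y_i}$, and conclude that every rule used lies in $\M^{\chicq}$.

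The execution differs from the paper in two respects worth noting. First, the paper does not work directly with an ABox and a Boolean PEQ. It first invokes the characterisation (Proposition~\ref{prop:fq-insep-charact-rules}) that $\Sigma$-query inseparability is equivalent to preservation of all $\Sigma$-\emph{rules} $\varphi(\x)\to\exists\y.\bigvee_i\psi_i(\x,\y)$, and then applies a general result (Proposition~\ref{prop:cqPreservation}) that handles the existential/disjunctive head by introducing a fresh predicate $Q$ and auxiliary rules $\psi_i(\x,\y)\to Q(\x)$, reducing to the purely datalog case covered by Proposition~\ref{prop:generalConsequencePreservation} and Lemma~\ref{lemma:HyperresolutionProofsToDatalogProofs}. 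This modular route avoids your case-splitting bookkeeping entirely. Second, the paper treats disjunction not by branching on choices but by running hyperresolution on the CNF of $\T$, so that derived objects are \emph{disjunctions} of ground atoms; Lemma~\ref{lemma:HyperresolutionProofsToDatalogProofs} then shows that for each disjunct $\gamma$ one gets a datalog proof of $\Gamma_\theta(\gamma\nu)$, and for each rule used, \emph{some} disjunct's datalog proof uses a corresponding rule of $\prog^{\chicq}$. Your ``one branch per disjunctive choice'' picture is workable but not how the paper argues.

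One imprecision to fix: you write that ``the $\eta$-image of each derived atom uses a $\Sigma$-predicate'' and hence lies in $\arel^{\mathsf{q}}$. That is not true of intermediate atoms in the derivation, which may use arbitrary predicates of $\T$. What you need (and what the paper uses) is only that the \emph{root} atoms---the instantiated head atoms witnessing the $\Sigma$-rule (or, in your formulation, the query atoms)---are $\Sigma$-facts over $\{\ast\}\cup\{c_{y_i}\}$ and thus in $\arel^{\mathsf{q}}$; membership of the intermediate rules in the support then follows because they appear on a proof of a relevant fact.
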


\subsubsection{Model Inseparability}
\label{sec:semantic-modules}

The modules generated by $\chicq$ may not be model inseparable from
$\T$. To see this, let $\T=\Tex$ and $\S = \{\A,\D,\roleR\}$, in which case
$\M^{\chicq}=\set{\ax_1,\ax_2}$. The interpretation 
$\calI$ where 
$\Delta^{\calI} = \{a,b\}$, $\A^{\calI} = \{a\}$,
$\B^{\calI} = \C^{\calI} = \{b\}$, $\D^{\calI} = \emptyset$
and $R^{\calI} = \{ (a,b)\}$ is a model of
$\M^{\chicq}$. This interpretation, however, 
cannot be extended to a model of $\ax_3$ (and hence of $\T$) without reinterpreting
$\A$, $\roleR$ or $\D$. 

The main insight behind locality and reachability
modules
is to ensure that each model of the module can be extended
to a model of $\T$ in a uniform way. 
Specifically,
each model of a $\top\!\bot\!^*$-locality or
$\top\!\bot\!^*$-reachability module can be
extended to a model of $\T$ by interpreting all other
predicates $\A$ as either $\emptyset$ or
$(\Delta^{\calI})^{n}$ with $n$ the arity of $\A$.
Thus, $\M = \{\ax_1,\ax_2,\ax_3\}$ is
a $\eqm_{\S}$-module of $\Tex$ since all its models
can be extended
by interpreting $\E$, $\F$ and $\G$ as
the domain, $\H$ as empty, and $\roleS$ as the Cartesian 
product of the domain.
We can capture this idea in our framework by means of the following setting.
\begin{definition} \label{def:chistar}
  The setting $\upchistar=\langle \theta^{\mathsf{m}}, \abox_0^{\mathsf{m}}, \arel^{\mathsf{m}} \rangle$ is
  as follows:
  $\theta^{\mathsf{m}}$ maps  each existentially quantified variable  to the fresh constant
  $\ast$ and
  $\abox_0^\mathsf{m} = \arel^\mathsf{m} = \abox_0^{\mathsf{f}}$. 
\end{definition}
In our example, 
$\prog^{\chistar}\cup\abox_0^{\mathsf{m}}$ entails the
relevant facts
$\A(\ast)$, $\roleR(\ast,\ast)$ and $\D(\ast)$, and hence  
$\M^{\chistar} = \{\ax_1,\ax_2,\ax_3\}$.

To show that $\M^{\chistar}$
is a $\eqm_{\S}$-module we prove that
all models $\calI$ of $\M^{\chistar}$
can be extended to a model of $\T$
as follows:
\begin{inparaenum}[\it(i)]
\item predicates not occurring in the materialisation
of $\prog^{\chistar} \cup \abox_0^{\mathsf{m}}$ are interpreted as empty;
\item predicates in the support
of $\upchistar$ (and hence occurring in $\M^{\chistar}$)
are interpreted as in $\calI$; and
\item all other predicates $\A$ are interpreted as $(\Delta^{\calI})^{n}$ with $n$ the arity of $\A$. 
\end{inparaenum}
%
\begin{restatable}{theorem}{datalogSemModules} \label{thm:datalogSemModules}
  $\M^{\chistar}\eqm_{\S}\T$. 
\end{restatable}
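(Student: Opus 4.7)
The theorem is symmetric, so both directions must be shown. The ``$\T\to\M^{\chistar}$'' half is immediate from $\M^{\chistar}\subseteq\T$: any model of $\T$ already satisfies $\M^{\chistar}$, so we may take $\calJ=\calI$. The plan for the converse is, given $\calI\models\M^{\chistar}$, to construct a model $\calJ$ of $\T$ on the same domain $\Delta^{\calI}$ with $A^{\calJ}=A^{\calI}$ for every $A\in\S$. Let $M$ denote the materialisation of $\prog^{\chistar}\cup\abox_0^{\mathsf{m}}$, and classify every predicate $A$ of arity $n$ into three groups: (i) $A$ does not occur in $M$; (ii) $A$ is the predicate of some node of some proof of a fact in $\arel^{\mathsf{m}}$; (iii) $A$ occurs in $M$ but lies outside group~(ii). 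Set $A^{\calJ}$ to $\emptyset$, to $A^{\calI}$, or to $(\Delta^{\calI})^n$ in groups (i), (ii), (iii) respectively. Every $A\in\S$ has $A(\ast,\ldots,\ast)\in\abox_0^{\mathsf{m}}\cap\arel^{\mathsf{m}}$ with a trivial one-node proof, hence lies in group~(ii), so $\calJ$ and $\calI$ agree on $\S$.

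I then verify $\calJ\models r$ for each $r\in\T$. If $r\in\M^{\chistar}$, pick a datalog version $\rho$ of $r$ that is in the support; since $\rho$ shares its body with $r$, every body predicate of $r$ lies in group~(ii), and as $\rho$ fires in $M$ all datalog versions of $r$ fire, placing every head predicate into group~(ii) or~(iii). Because $\calJ$ agrees with $\calI$ on body predicates while Cartesian interpretations can only enlarge head atoms, $\calI\models r$ transfers to $\calJ\models r$. Now suppose $r\notin\M^{\chistar}$ and write $r=\fml(\x)\to\exists\y.\bigvee_i \fmm_i(\x,\y)$. The key observation is that $\theta^{\mathsf{m}}$ sends every existential variable to $\ast$, so every fact in $M$ has all arguments equal to $\ast$; hence $A$ occurs in $M$ iff $A(\ast,\ldots,\ast)\in M$. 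If some body predicate of $r$ lies in group~(i) then the body is empty in $\calJ$ and $r$ is vacuously true. Otherwise $\fml$ instantiated at $\ast$ holds in $M$, so for every atom $C$ occurring in any $\fmm_i$ the datalog rule $\fml\to C\theta^{\mathsf{m}}$ derives its head in $M$.

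The main obstacle is the last step: showing that every such head predicate $C$ lies in group~(iii), so that all disjuncts $\fmm_i$ are satisfied in $\calJ$ by arbitrary witnesses. Suppose for contradiction that $C$ were in group~(ii); then $C(\ast,\ldots,\ast)$ appears as a node of some proof $\pi$ of a fact $G\in\arel^{\mathsf{m}}$. Replacing the subtree of $\pi$ rooted at that node by a one-step derivation of $C(\ast,\ldots,\ast)$ via the datalog rule $\fml\to C\theta^{\mathsf{m}}$---with each atom of $\fml$ instantiated at $\ast$ supplied by its own (existing) proof in $M$---yields a valid proof of $G$ that uses a datalog version of $r$. That version is then in the support, so $r\in\M^{\chistar}$, contradicting the hypothesis. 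The delicate ingredient is precisely this proof splicing; it is available only because $\theta^{\mathsf{m}}$ collapses every existential witness to $\ast$, which guarantees that whenever the first-order body of $r$ is satisfiable in $\calJ$, the datalog version of $r$ fires in $M$ on the canonical substitution $\x\mapsto(\ast,\ldots,\ast)$.
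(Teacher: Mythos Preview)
Your proposal is correct and follows essentially the same route as the paper. Both arguments construct $\calJ$ by a three-way split of predicates (empty / copied from $\calI$ / full Cartesian power), and both hinge on the same proof-splicing step: if a head predicate of $r$ lands in the ``copy from $\calI$'' class, one grafts a one-step derivation via a datalog version of $r$ into an existing proof of a relevant fact, forcing $r\in\M^{\chistar}$. The paper phrases the middle class as $\S\cup\sig{\rel(\upchistar)}$ and organises the case analysis by first assuming $\calJ\models\fml\sigma$ and then asking whether some head predicate falls outside group~(iii), whereas you split first on $r\in\M^{\chistar}$ versus $r\notin\M^{\chistar}$; these are equivalent reorganisations of the same argument.
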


\subsection{Modules for Ontology Classification}\label{sec:modules-class}

Module extraction has been 
exploited for 
optimising ontology classification~\cite{DBLP:conf/semweb/RomeroGH12,DBLP:conf/ore/TsarkovP12,DBLP:journals/jar/GrauHKS10}.
In this case, it is not only required that 
modules are implication inseparable from $\T$, but also that they
preserve all implications  $\A(\x) \to \B(\x)$ with 
$\A\in \S$ but $\B \notin \S$. This requirement
can be captured as given next.

\begin{definition}
  TBoxes $\T$ and $\T'$ are \emph{$\S$-classification inseparable}
  ($\T\eqc_{\S}\T'$) if for each $\fml$ of the form $\A(\x)\to\B(\x)$
  with $\A\in\S, $ and
  $\B\in\sig{\T\cup\T'}$ we have $\T\models\fml$ iff $\T'\models\fml$.
 \end{definition}
Classification inseparability is a stronger requirement
than implication inseparability. For
$\T = \{ \A(x) \to \B(x)\}$ and $\S = \{ \A \}$,
$\M = \emptyset$ is implication inseparable from $\T$, whereas
classification inseparability requires that $\M = \T$. 

Modular reasoners such as MORe and Chainsaw rely on
locality $\bot$-modules, which satisfy this requirement.
Each model of a $\bot$-module $\M$ can be extended
to a model of $\T$ by interpreting all additional predicates as empty, which
is not possible if $\A \in \S$ and $\T$ entails $\A(x) \to \B(x)$ but $\M$ does not.
We can cast $\bot$-modules in our framework with
the following setting, which extends $\upchistar$
in Def.~\ref{def:chistar} by also considering
as relevant facts involving predicates not in $\S$.
\begin{definition} \label{def:chibot}
  The setting $\upchibot=\langle \theta^{\mathsf{b}}, \abox_0^{\mathsf{b}}, \arel^{\mathsf{b}} \rangle$ is as
  follows: $\theta^{\mathsf{b}} = \theta^{\mathsf{m}}$, 
  $\abox_0^{\mathsf{b}} = \abox_0^{\mathsf{m}}$, and 
  $\arel$ consists of all facts $\A(\ast, \ldots, \ast)$ where
  $\A\in\sig{\T}$.\qedhere
\end{definition}
The use of $\bot$-modules is, however, 
stricter than is needed for ontology classification.
For instance, if we consider
$\T = \Tex$ and $\S = \{\A\}$ we have
that $\M^{\chibot}$ contains all rules
$\ax_1$--$\ax_6$, but since $\A$ does not
have any subsumers in $\Tex$ the empty TBox
is already classification inseparable from $\Tex$.

The following module setting extends
$\upchiimp$ in Def.\ \ref{def:chiimp}
to ensure classification inseparability.
As in the case of $\chibot$ in Def.~\ref{def:chibot}
the only required modification is to
also consider as relevant facts 
involving predicates outside $\S$.

\begin{definition} \label{def:chicls}
 Setting $\upchicls= (\theta^{\mathsf{c}}, \abox_0^{\mathsf{c}}, \arel^{\mathsf{c}})$
  is as follows:
  $\theta^{\mathsf{c}} = \theta^{\mathsf{i}}$,
  $\abox_0^{\mathsf{c}} = \abox_0^{\mathsf{i}}$,
  and $\arel^{\mathsf{c}}$ consists of all facts
  $\B(c_{\A}^1, \ldots, c_{\A}^n)$ s.t.\ 
  $\A \neq \B$ are $n$-ary predicates, $A \in \S$ and
  $\B\in\sig{\T}$.
\end{definition}
Indeed, if we consider again $\T = \Tex$ and $\S = \{\A\}$, the module
for $\upchicls$ is empty, as desired.

\begin{restatable}{theorem}{datalogClassifModules}
  \label{thm:datalogClassifModules}
  $\M^{\chicls}\eqc_\S\T$. 
\end{restatable}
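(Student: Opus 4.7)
The plan is to follow the same strategy used for Theorem~\ref{thm:datalogAtomicSubsumptionModules} (implication inseparability), since $\upchicls$ differs from $\upchiimp$ only in that $\arel^{\mathsf{c}}$ is enlarged to admit relevant facts $\B(c_{\A}^1,\ldots,c_{\A}^n)$ where the target predicate $\B$ may lie outside $\S$ (as long as $\B\in\sig{\T}$). One direction is immediate: since $\M^{\chicls}\subseteq\T$, monotonicity of first-order entailment yields $\M^{\chicls}\models\fml \Rightarrow \T\models\fml$ for any $\fml$ of the classification form.

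For the converse, I would fix $\fml=\A(\x)\to\B(\x)$ with $\A\in\S$ and $\B\in\sig{\T}$ and assume $\T\models\fml$. The case $\A=\B$ is trivial, and otherwise $\A$ and $\B$ must share the same arity $n$. Writing $\bar{c}=(c_{\A}^1,\ldots,c_{\A}^n)$, by Def.~\ref{def:chicls} we have $\A(\bar{c})\in\abox_0^{\mathsf{c}}$, so $\T\cup\abox_0^{\mathsf{c}}\models\B(\bar{c})$. Next I would invoke the simulation underlying the proof of Theorem~\ref{thm:datalogAtomicSubsumptionModules}: because $\theta^{\mathsf{c}}=\theta^{\mathsf{i}}$ and $\abox_0^{\mathsf{c}}=\abox_0^{\mathsf{i}}$, the datalog program and starting ABox coincide with those of $\upchiimp$, and every proof in $\T\cup\abox_0^{\mathsf{c}}$ of a ground fact over the constants of $\abox_0^{\mathsf{c}}$ can be mirrored by a forward-chaining proof of that fact in $\prog^{\chicls}\cup\abox_0^{\mathsf{c}}$, in such a way that each rule of $\T$ used in the original derivation gives rise to a datalog rule of $\prog^{\chicls}$ appearing in the forward-chaining proof.

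Applying this to $\B(\bar{c})$: since $\A\neq\B$, both are $n$-ary, $\A\in\S$ and $\B\in\sig{\T}$, we have $\B(\bar{c})\in\arel^{\mathsf{c}}$ by Def.~\ref{def:chicls}. By Def.~\ref{def:bla}, the datalog rules appearing in this forward-chaining proof belong to the support of $\upchicls$, and hence their originating $\T$-rules lie in $\M^{\chicls}$. Consequently $\M^{\chicls}\cup\{\A(\bar{c})\}\models\B(\bar{c})$, and because the $c_{\A}^i$ are fresh constants not occurring in $\M^{\chicls}$, we conclude $\M^{\chicls}\models\fml$, as required. The only real technical content is the simulation step, which is exactly the lemma already used for Theorem~\ref{thm:datalogAtomicSubsumptionModules}; everything else is bookkeeping about which facts now count as relevant. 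The main obstacle, then, is verifying that the simulation lemma applies uniformly regardless of whether the target predicate $\B$ belongs to $\S$—but since the lemma concerns arbitrary ground consequences over $\abox_0^{\mathsf{c}}$-constants and does not depend on $\S$-membership of the derived predicate, this extension is immediate.
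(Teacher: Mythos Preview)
Your proposal is correct and follows essentially the same approach as the paper, which simply notes that the proof is analogous to that of Theorem~\ref{thm:datalogAtomicSubsumptionModules} (i.e., an application of the simulation packaged as Proposition~\ref{prop:generalConsequencePreservation}, with the evident mapping $\nu_\sigma$ sending the fresh constants to $c_\A^1,\ldots,c_\A^n$). The only change---that $\B$ now ranges over $\sig{\T}$ rather than $\S$---is, exactly as you observe, absorbed entirely by the enlarged $\arel^{\mathsf{c}}$ and does not affect the simulation step.
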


\subsection{Additional Properties of Modules}

Although the essential property of a module $\M$ is
that it captures all relevant $\S$-consequences of $\T$,
in some applications it is
desirable that modules satisfy additional requirements.
  
In ontology reuse scenarios, it is sometimes desirable that a module
$\M$ does not ``leave any relevant information behind'', in the sense that
$\T \setminus \M$ does not
entail any relevant $\S$-consequence---a property 
referred to as \emph{depletingness}
\cite{DBLP:journals/ai/KontchakovWZ10}.
\begin{definition}
  Let $\equiv_{\S}^z$ be an inseparability relation.
  A $\equiv_{\S}^z$-module $\M$ of $\T$ is \emph{depleting} if
  $\T\setminus\M\equiv_{\S}^z\emptyset$.
\end{definition}
Note that not all modules are depleting: for some relevant
$\S$-entailment $\varphi$ it may be that $\M \models \varphi$
(as required by the definition of module), but also that
$(\T \setminus \M) \models \varphi$, in which case $\M$ is not depleting.
The following theorem establishes 
that all modules defined in Section \ref{sec:all-modules}
are depleting.
\begin{restatable}{theorem}{depletingness}
  $\M^{\chi_z}$ is depleting for each $z\in\set{\mathsf{m},\mathsf{q},\mathsf{f},\mathsf{i},\mathsf{c}}$. 
\end{restatable}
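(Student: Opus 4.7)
The plan is to deduce depletingness directly from the correctness theorems of Section~\ref{sec:all-modules} (and the classification variant Theorem~\ref{thm:datalogClassifModules}) by reinterpreting the setting $\upchi_z$ as a module setting for the residual TBox $\T \setminus \M^{\chi_z}$ rather than for $\T$. The entire argument reduces to the claim that the module of $\upchi_z$ so reinterpreted is empty: once this is established, applying the correctness theorem for the given $z$ to $\T \setminus \M^{\chi_z}$ yields $\emptyset \equiv_{\S}^{z} \T \setminus \M^{\chi_z}$, which is exactly depletingness.

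To prove that the reinterpreted module is empty, I would argue by contradiction. Let $\prog'$ denote the program produced by Definition~\ref{def:bla} when applied to $\T \setminus \M^{\chi_z}$ under $\upchi_z$, and suppose that some rule $r \in \T \setminus \M^{\chi_z}$ contributes a datalog rule $r' \in \prog'$ that lies in the support of the reinterpreted setting. Then $r'$ occurs in a forward-chaining proof of some fact in $\arel^z$ from $\prog' \cup \abox_0^z$. Since $\prog' \subseteq \prog^{\chi_z}$, the very same proof tree is a valid forward-chaining proof in $\prog^{\chi_z} \cup \abox_0^z$, so $r'$ also lies in the support of the original $\upchi_z$. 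By the definition of $\M^{\chi_z}$, this forces $r \in \M^{\chi_z}$, contradicting $r \in \T \setminus \M^{\chi_z}$.

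The main obstacle I anticipate is verifying that the reinterpretation is legitimate, because Definition~\ref{def:bla} requires $\theta$ to be defined on the existentially quantified variables of the input TBox and $\arel$ to be a signature-ABox of that TBox. The substitution $\theta^{z}$ can be harmlessly restricted to the existentially quantified variables of $\T \setminus \M^{\chi_z}$ without altering $\prog'$, and for $z = \mathsf{c}$ any fact in $\arel^{\mathsf{c}}$ whose predicate lies in $\sig{\T} \setminus \sig{\T \setminus \M^{\chi_z}}$ can be discarded without consequence, since such predicates occur in no rule head of $\prog'$ and hence contribute no derivations beyond $\abox_0^z$. With these minor adjustments, Theorems~\ref{thm:datalogAtomicSubsumptionModules}--\ref{thm:datalogSemModules} and~\ref{thm:datalogClassifModules} apply verbatim, and the two-step reduction delivers the claim uniformly for every $z \in \set{\mathsf{m},\mathsf{q},\mathsf{f},\mathsf{i},\mathsf{c}}$.
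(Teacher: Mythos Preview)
Your argument is correct and, in fact, more uniform than the paper's. The paper treats the deductive cases $z\in\set{\mathsf{q},\mathsf{f},\mathsf{i},\mathsf{c}}$ by invoking part~3 of Propositions~\ref{prop:generalConsequencePreservation} and~\ref{prop:cqPreservation}, which are proved via a justification argument: any justification of a $\S$-consequence in $\T\setminus\M^{\chi_z}$ is also a justification in $\T$, hence is contained in $\M^{\chi_z}$ by part~2, and so must be empty. For $z=\mathsf{m}$ the paper instead gives a direct model construction, materialising the datalog fragment of $\T\setminus\M^{\chistar}$ over the $\S$-ABox induced by an arbitrary interpretation and checking that no new $\S$-facts (nor $\bot$) arise. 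Your route---reapplying the correctness theorems to the residual TBox after observing that its module is empty because $\prog'\subseteq\prog^{\chi_z}$---dispatches all five cases at once and avoids the separate model construction for $\mathsf{m}$. One small gap: you address the adjustment of $\arel$ only for $z=\mathsf{c}$, but $\arel^{\mathsf{q}}$ also shrinks when passing to $\T\setminus\M^{\chi_z}$ (it loses the constants $c_{y_i}$ for variables $y_i$ appearing only in $\M^{\chi_z}$); the fix is identical and harmless, since the shrunken $\arel$ is still contained in the original one.
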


Another common application of modules is to optimise the
computation of justifications: minimal subsets of
a TBox that are sufficient to entail a given formula
\cite{DBLP:conf/semweb/KalyanpurPHS07,DBLP:conf/aswc/SuntisrivarapornQJH08}.
\begin{definition}
  Let $\T\models\fml$. A \emph{justification
    for $\fml$ in $\T$} is a minimal subset $\T'\subseteq\T$ such that
  $\T'\models\fml$.
\end{definition}
Justifications are displayed in ontology development
platforms as explanations of why an entailment holds, and 
tools typically compute all of them. Extracting justifications
is a computationally intensive task, and locality-based modules have
been used to reduce the size of the problem: if $\T'$
is a justification of $\varphi$ in $\T$, then $\T'$ is contained in
a locality module of $\T$ for  $\S = \sig{\varphi}$. 
Our modules are
also justification-preserving, and we can adjust
our modules depending on what kind of first-order sentence $\varphi$ is.
\begin{restatable}{theorem}{presJustifications} \label{thm:presJustifications} ~
 Let $\T'$ be a justification for a first-order sentence $\varphi$ in $\T$ and let
 $\sig{\varphi} \subseteq \S$. Then, $\T' \subseteq \M^{\chistar}$. Additionally,
 the following properties hold:
 \begin{inparaenum}[\it (i)]
 \item if $\varphi$ is a rule, then $\T' \subseteq \M^{\chicq}$;
 \item if $\varphi$ is datalog, then $\T' \subseteq \M^{\chidd}$; and
 \item if $\varphi$ 
   is of the form $\A(\x)\to\B(\x)$, then $\T' \subseteq
   \M^{\chiimp}$; finally, if $\varphi$ satisfies
   $A \in \S, B \in \sig{\T}$, then
   $\T' \subseteq \M^{\chicls}$.
 \end{inparaenum}
\end{restatable}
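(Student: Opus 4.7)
I would prove each containment by combining the minimality of $\T'$ with the proof-embedding construction underlying the relevant correctness theorem. The starting observation, shared across all cases, is that minimality forces every rule $r \in \T'$ to appear in every derivation of $\varphi$ from $\T'$: a derivation not using $r$ would be a derivation from $\T' \setminus \{r\}$, witnessing $\T' \setminus \{r\} \models \varphi$ and contradicting minimality of $\T'$. Hence, as soon as a derivation of $\varphi$ from $\T'$ can be translated into a forward-chaining proof of a relevant fact in $\prog^{\chi_z} \cup \abox_0^{\chi_z}$ that maps each rule used in the derivation to some datalog image, every rule of $\T'$ has a datalog image appearing in that proof, hence lies in the support of $\upchi_z$, and so belongs to $\M^{\chi_z}$.

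For the restricted cases (i)--(iii), the required translation is precisely the embedding used in the proofs of Theorems \ref{thm:datalogAtomicSubsumptionModules}--\ref{thm:datalogCQModules} (and the classification variant): because $\varphi$ is either a rule, a datalog rule, an atomic implication, or a classification inclusion, the setting $\upchi_z$ was designed so that proofs of such sentences from $\T$ embed into forward-chaining proofs of facts in $\arel^{\chi_z}$, and invoking that embedding for $\T'$ (viewed as a subset of $\T$) directly yields the containment. The main claim ($\M^{\chistar}$ with arbitrary first-order $\varphi$) is handled by the model-theoretic argument behind Theorem \ref{thm:datalogSemModules}: starting from the materialisation of $\prog^{\chistar} \cup \abox_0^{\mathsf{m}}$, one builds a canonical interpretation in which a $\S$-sentence entailed by $\T$ is witnessed through a forward-chaining-style derivation using datalog images of rules in $\T$, and the same combinatorial step then applies.

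The hardest step is precisely this reduction for $\chistar$: turning a purely semantic entailment $\T' \models \varphi$---where $\varphi$ may be an arbitrary first-order sentence mentioning no symbols outside $\S$---into a concrete forward-chaining derivation in which every rule of $\T'$ is visibly used. The trivialisation of non-support predicates (as $\emptyset$ or $(\Delta^{\calI})^n$) employed in the proof of Theorem \ref{thm:datalogSemModules} is what enables this reduction, because it lets one collapse arbitrary models of $\T$ to canonical ones driven by the datalog materialisation; once this is in place, the combinatorial conclusion that each rule of $\T'$ produces a datalog image in the support follows uniformly across all five cases.
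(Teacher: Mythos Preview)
Your plan for cases (i)--(iii) is correct and matches the paper: item~2 of Propositions~\ref{prop:generalConsequencePreservation} and~\ref{prop:cqPreservation} gives justification preservation directly once the setting satisfies the hypotheses, and that verification is precisely the content of the proofs of Theorems~\ref{thm:datalogAtomicSubsumptionModules}, \ref{thm:disjDatalogModules}, \ref{thm:datalogCQModules} and~\ref{thm:datalogClassifModules}. The gap is in your treatment of the $\chistar$ case for an arbitrary first-order $\varphi$. You propose to reduce $\T'\models\varphi$ to a forward-chaining derivation in $\prog^{\chistar}\cup\abox_0^{\mathsf{m}}$, but the embedding machinery (Lemma~\ref{lemma:HyperresolutionProofsToDatalogProofs} and the two propositions built on it) takes as input a hyperresolution proof of a ground disjunction from a function-free ABox; it relies on $\varphi$ being a rule so that $\T'\models\varphi$ becomes $\cnf(\T')\cup\abox\vdash\psi$ with $\abox$ a $\S$-ABox and $\psi$ a disjunction of $\S$-facts. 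For an arbitrary first-order $\S$-sentence there is no such rewriting---clausifying $\neg\varphi$ introduces non-unit clauses and fresh Skolem functions that neither the lemma nor the propositions accommodate---and the model-extension construction of Theorem~\ref{thm:datalogSemModules} does not produce a datalog proof tree either: it goes from models of $\M^{\chistar}$ to models of $\T$, which is the wrong direction for extracting a derivation.

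A route that does work is to apply Theorem~\ref{thm:datalogSemModules} to $\T'$ rather than to $\T$. Writing $\M^{\chistar}(\T',\S)$ for the module extracted from $\T'$, one gets $\M^{\chistar}(\T',\S)\eqm_\S\T'$, and since model inseparability preserves all first-order $\S$-sentences, $\M^{\chistar}(\T',\S)\models\varphi$; minimality of $\T'$ then forces $\M^{\chistar}(\T',\S)=\T'$. Monotonicity of the support computation (the datalog program for $\T'$ is contained in that for $\T$, while $\abox_0^{\mathsf{m}}$ and $\arel^{\mathsf{m}}$ depend only on $\S$) then gives $\T'=\M^{\chistar}(\T',\S)\subseteq\M^{\chistar}(\T,\S)$. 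The paper's own one-line proof only invokes the propositions for rules and does not spell this case out either.
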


\subsection{Complexity of Module Extraction}
\label{sec:complexity}
 
We conclude this section by showing that our modules can be 
efficiently computed in most practically relevant cases. 

\begin{restatable}{theorem}{complexity} \label{thm:complexity} %
  Let $m$ be a non-negative integer and $L$ a class of TBoxes s.t.\ 
  each rule in a TBox from $L$ has at most $m$
  distinct universally quantified variables.  The following problem is
  tractable: given $z \in \{\mathsf{q}, \mathsf{f}, \mathsf{i}, \mathsf{c}\}$, $\T\in L$, and 
  $\ax\in\T$, decide whether \mbox{$\ax\in\M^{\chi_z}$}.
 The problem is solvable in polynomial time for arbitrary classes $L$ of TBoxes if
 $z = \mathsf{m}$. 
\end{restatable}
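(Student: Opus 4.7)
I will show that $\M^{\chi_z}$ can be computed explicitly in polynomial time, from which the decision problem follows. Following Definition~\ref{def:bla}, the algorithm has four stages: (i) construct $\prog^{\chi_z}$, $\abox_0^{\chi_z}$ and $\arel^{\chi_z}$; (ii) materialise $\prog^{\chi_z}\cup\abox_0^{\chi_z}$; (iii) compute the support of each fact of $\arel^{\chi_z}$ in that materialisation; and (iv) map every supporting datalog rule back to its originating rule in $\T$.

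Stage (i) is immediate from the definitions in Section~\ref{sec:all-modules}: $\prog^{\chi_z}$ has at most $\sum_{\ax\in\T}|\ax|$ rules, and the universe $C$ of fresh constants is of polynomial size---$|C|\in O(|\T|)$ for $z\in\{\mathsf{q},\mathsf{f}\}$ (one $c_{y_j}$ per existentially quantified variable, plus $\ast$); $|C|\in O(|\T|+|\S|\cdot k_{\max})$ for $z\in\{\mathsf{i},\mathsf{c}\}$, where $k_{\max}$ is the maximum arity appearing in $\T$; and $|C|=1$ for $z=\mathsf{m}$. Consequently $\abox_0^{\chi_z}$ and $\arel^{\chi_z}$ are also of polynomial size. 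Stages (iii)--(iv) are standard once the materialisation is available: the support of a relevant fact $F$ equals the union, over all facts $G$ backwards-reachable from $F$ in the derivation graph (nodes: derivable facts; arcs $G\to G'$ whenever $G$ appears in the body of some rule instance that derives $G'$), of the rules used to derive $G$; this is polynomial in the size of the materialisation, and mapping the resulting datalog rules back to rules of $\T$ is a direct lookup.

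The crux is stage (ii). For $z=\mathsf{m}$, every term in $\prog^{\chi_m}\cup\abox_0^{\chi_m}$ is $\ast$, so every derivable fact has the form $\A(\ast,\ldots,\ast)$ and any rule body matches only under the substitution sending every variable to $\ast$; materialisation is therefore polynomial with no bound on the shape of rules. For $z\in\{\mathsf{q},\mathsf{f},\mathsf{i},\mathsf{c}\}$, the hypothesis that each rule has at most $m$ distinct universally quantified variables is essential: matching one rule body against the current facts requires trying at most $|C|^m$ substitutions, and each datalog rule produces at most $|C|^m$ distinct head facts (since after applying $\theta$ the head atom is a tuple of universal variables and Skolem constants, fully determined by the universal substitution). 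Hence the materialisation contains at most $|\prog^{\chi_z}|\cdot|C|^m$ facts---polynomial for fixed $m$---and standard semi-naive evaluation completes in polynomial time. The main obstacle is precisely this counting argument: without a bound on $m$, both the number of body matches and the number of derivable facts could blow up even though $|C|$ remains polynomial, which is exactly why the case $z=\mathsf{m}$ (where $|C|=1$ trivialises the combinatorics) has to be treated separately.
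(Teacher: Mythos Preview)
Your proposal follows essentially the same route as the paper's proof sketch: bound the number of constants, invoke the standard $O(|\prog|\cdot n^v)$ complexity of datalog materialisation with $v$ the maximum number of (universal) variables per rule, observe that support computation is no harder, and note that for $z=\mathsf{m}$ the single constant $\ast$ collapses everything to propositional datalog. The decomposition into stages and the explicit counting of head instances are a bit more detailed than what the paper writes, but the argument is the same.

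One small inaccuracy: your claim that $\arel^{\chi_z}$ is of polynomial size is not true for $z=\mathsf{q}$. There, $\arel^{\mathsf{q}}$ consists of \emph{all} $\Sigma$-facts over the constants $\{c_{y_i}\}\cup\{\ast\}$, so its size is $\sum_{\A\in\Sigma}|C|^{\mathrm{arity}(\A)}$, and the bound $m$ on universally quantified variables does not bound predicate arities (a body atom may repeat variables). This does not actually break the argument, because you never need to enumerate $\arel^{\mathsf{q}}$: the materialisation itself has at most $|\prog^{\chi_z}|\cdot|C|^m$ facts (as you correctly argue), and for each derived fact you can test membership in $\arel^{\mathsf{q}}$ in constant time (just check whether the predicate lies in $\Sigma$). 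So stage~(iii) should iterate over the materialisation rather than over $\arel^{\chi_z}$; with that adjustment your proof goes through and matches the paper.
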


We now provide a proof sketch for this result.
Checking whether a datalog program $\prog$ and an ABox $\abox$
entail a fact
is feasible in  $\mathcal{O}(\vert \prog \vert \cdot n^v)$, with 
$n$ the number 
of constants in $\prog \cup \abox$ and $v$ the maximum
number of variables in a rule from $\prog$ \cite{DBLP:journals/csur/DantsinEGV01}.
Thus, although datalog reasoning is exponential in the size of $v$
(and hence of $\prog$), it
is tractable if $v$ is bounded by a constant.

Given arbitrary $\T$ and $\S$, and for $z \in \{\mathsf{m}, \mathsf{q}, \mathsf{f}, \mathsf{i}, \mathsf{c}\}$,
the datalog program $\prog^{\chi_z}$ 
can be computed in linear time in the size of $\vert \T \vert$.
The number of constants $n$ in $\upchi_z$ (and hence in
$\prog^{\chi_z} \cup \abox_0^z$) is
linearly bounded in $\vert \T \vert$, whereas the maximum number of 
variables $v$ coincides with the maximum number
of universally quantified variables in a rule from $\T$. 
As shown in \cite{DBLP:conf/aaai/ZhouNGH14}, 
computing the support of a fact in a datalog program is
no harder than fact entailment, and thus module extraction in our approach is
feasible in  $\mathcal{O}(\vert \T \vert \cdot n^v)$, and thus
tractable for ontology languages where rules have a bounded number of variables (as is the case for
most DLs). Finally, if $z = \mathsf{m}$ the setting $\upchistar$ involves 
a single constant $\ast$ and module extraction boils down to reasoning
in propositional datalog (a tractable problem regardless of $\T$).

\subsection{Module Containment and Optimality} \label{sec:module-containment}

Intuitively, the more expressive the language for which preservation
of consequences is required the larger modules need to be.
The following proposition shows 
that our modules are consistent with this 
intuition.
%
\begin{restatable}{proposition}{hierarchy} \label{prop:module-hierarchy}
  $\M^{\chiimp} \subseteq \M^{\chidd} \subseteq \M^{\chicq} \subseteq
  \M^{\chistar}\subseteq\M^{\chibot}$ and $\M^{\chiimp} \subseteq \M^{\chicls} \subseteq \M^{\chibot}$
\end{restatable}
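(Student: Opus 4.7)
The plan is to compare, for each pair of settings, the three determining ingredients: the datalog program $\prog^{\chi}$, the initial ABox $\abox_0$, and the relevant ABox $\arel$. First I would record that $\theta^{\mathsf{i}}=\theta^{\mathsf{f}}=\theta^{\mathsf{q}}=\theta^{\mathsf{c}}$ and $\theta^{\mathsf{m}}=\theta^{\mathsf{b}}$ (so the corresponding clusters share their datalog programs), and that $\abox_0^{\mathsf{f}}=\abox_0^{\mathsf{q}}=\abox_0^{\mathsf{m}}=\abox_0^{\mathsf{b}}$ is the critical ABox over $\ast$, while $\abox_0^{\mathsf{i}}=\abox_0^{\mathsf{c}}$ uses distinct constants per predicate. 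With this bookkeeping in place, three of the six inclusions become immediate: the pairs $(\chidd,\chicq)$, $(\chistar,\chibot)$ and $(\chiimp,\chicls)$ share both program and initial ABox, and the obvious containments $\arel^{\mathsf{f}}\subseteq\arel^{\mathsf{q}}$, $\arel^{\mathsf{m}}\subseteq\arel^{\mathsf{b}}$, $\arel^{\mathsf{i}}\subseteq\arel^{\mathsf{c}}$ mean that a rule supporting a fact in the smaller relevant set a fortiori supports one in the larger.

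For the remaining inclusions $\M^{\chiimp}\subseteq\M^{\chidd}$, $\M^{\chicq}\subseteq\M^{\chistar}$ and $\M^{\chicls}\subseteq\M^{\chibot}$, I would use a constant-collapsing homomorphism. Let $\sigma$ map every auxiliary constant of the source setting to $\ast$: just the $c_{\A}^i$ for the first inclusion, just the $c_{y_j}$ for the second, and both kinds for the third. One checks directly that $\sigma$ sends the source $\abox_0$ into the target $\abox_0$ and transforms each datalog rule of the source program into the datalog rule of the target program derived from the same axiom of $\T$. Given a forward-chaining proof $\rho$ of some $F\in\arel$ in the source setting, I would apply $\sigma$ pointwise to the fact-labels of $\rho$ to obtain a proof of $F\sigma$ in the target setting: leaves map into the target $\abox_0$, and each rule-consequence step survives because rule bodies are universally quantified and consequence commutes with substitution. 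A brief case analysis then shows that $F\sigma$ lies in the target relevant set: for instance $\B(c_{\A}^1,\dots,c_{\A}^n)\sigma=\B(\ast,\dots,\ast)$ belongs to $\arel^{\mathsf{f}}$ whenever $\B\in\S$ and to $\arel^{\mathsf{b}}$ whenever $\B\in\sig{\T}$, while any $\S$-fact over $\{c_{y_j},\ast\}$ collapses to an $\S$-fact over $\ast$ in $\arel^{\mathsf{m}}$. Since the source supporting rule and its $\sigma$-image arise from the same axiom of $\T$, this axiom lies in the target module.

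The main subtlety is verifying that the image of a proof tree under $\sigma$ is genuinely a forward-chaining proof in the MGU-based sense of the preliminaries, since the source MGU need not remain an MGU in the target. This reduces to the routine fact that composing the source MGU with $\sigma$ yields a unifier for the target instance, so a target MGU exists by standard properties of unification; beyond this the argument is the bookkeeping outlined above, and I expect no essential difficulty.
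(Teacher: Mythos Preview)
Your approach is correct and essentially identical to the paper's: the paper abstracts your constant-collapsing proof-transformation into a standalone lemma (defining a \emph{homomorphism} $\mu$ between settings by the conditions $\theta'=\theta\mu$, $\abox_0\mu\subseteq\abox_0'$, $\arel\mu\subseteq\arel'$, and proving once that $\M^{\chi}\subseteq\M^{\chi'}$ whenever such $\mu$ exists), after which the proposition is a one-line check that the six required homomorphisms---three identities and three collapses to $\ast$, exactly as you describe---exist. Your MGU concern is handled the same way in the paper's lemma; since all premise facts are ground, composing the source MGU with $\sigma$ is already the target MGU.
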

As already discussed, these containment relations 
are strict for many $\T$ and $\S$.

We conclude this section by discussing whether 
each
$\upchi_z$ with $z \in \{\mathsf{q}, \mathsf{f}, \mathsf{i}, \mathsf{c}\}$
is optimal for its inseparability relation in the sense that there is no
setting that produces smaller modules.
To make optimality statements precise we need to 
consider families of module settings, that is, functions
that assign a module setting for each pair of $\T$ and $\S$.
\begin{definition}
  A \emph{setting family} is a function $\Psi$ that maps a TBox
  $\T$ and signature $\S$ to a \bla{} for $\T$ and $\S$. 
  We say that $\Psi$ is \emph{uniform} if for every $\S$ and
  pair of TBoxes $\T,\T'$ with the same number of
  existentially
  quantified variables $\Psi(\T,\S)=\Psi(\T',\S)$.
  Let $z\in\set{\mathsf{i},\mathsf{f},\mathsf{q},\mathsf{c}}$; then,
  $\Psi$ is \emph{$z$-admissible} if, for each
  $\T$ and $\S$, $\M^{\Psi(\T,\S)}$ is a $\equiv^z_\S$-module of $\T$. 
  Finally, $\Psi$ is \emph{$z$-optimal} if
  $\M^{\Psi(\T,\S)}\subseteq\M^{\Psi'(\T,\S)}$ for every $\T$, $\S$
  and every uniform $\Psi'$ that is $z$-admissible.
\end{definition}
Uniformity ensures that settings
do not depend on the specific shape of rules in $\T$, but rather only on $\S$
and the number of existentially quantified variables in $\T$.
In turn, admissibility ensures that each setting yields a module.
The (uniform and admissible) family $\Psi^z$
corresponding to each setting  $\upchi^z$
in Sections \ref{sec:all-modules} and \ref{sec:modules-class}
is defined in the obvious way: 
for each $\T$ and $\S$, $\Psi^z(\T,\S)$ is the
setting $\upchi^z$ for $\T$ and $\S$.

The next theorem shows that $\Psi^z$ is optimal for implication and
classification inseparability.
\begin{restatable}{theorem}{optimality} \label{thm:optimality} %
  $\Psi^z$ is $z$-optimal for $z\in\set{\mathsf{i},\mathsf{c}}$.
\end{restatable}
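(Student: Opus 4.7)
Fix $z\in\{\mathsf{i},\mathsf{c}\}$, a TBox $\T$, a signature $\S$, and a uniform $z$-admissible family $\Psi'$. Write $\upchi_z=\Psi^z(\T,\S)=\langle\theta^z,\abox_0^z,\arel^z\rangle$ and $\upchi'=\Psi'(\T,\S)=\langle\theta',\abox_0',\arel'\rangle$; by uniformity $\upchi'$ depends only on $\S$ and on the number of existentially quantified variables of $\T$. The strategy is: pick any $\alpha\in\M^{\upchi_z}$, take a forward-chaining proof $\rho$ in $\prog^{\upchi_z}\cup\abox_0^z$ of some relevant fact $\fct=\B(c_\A^1,\ldots,c_\A^n)\in\arel^z$ that uses a $\theta^z$-rendering of $\alpha$, and transfer $\rho$ through a substitution $\sigma$ from constants of $\upchi_z$ to constants of $\upchi'$ to an analogous proof in $\prog^{\upchi'}\cup\abox_0'$ that uses the $\theta'$-rendering of $\alpha$; this will place $\alpha\in\M^{\upchi'}=\M^{\Psi'(\T,\S)}$.

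The substitution $\sigma$ must satisfy $\sigma\circ\theta^z=\theta'$ (so datalog rules of $\prog^{\upchi_z}$ map to datalog rules of $\prog^{\upchi'}$), send each leaf of $\rho$ (which, by inspection of $\abox_0^z$, has the form $\A'(c_{\A'}^1,\ldots,c_{\A'}^{n'})$ for some $\A'\in\S$) into $\abox_0'$, and send $\fct$ into $\arel'$. I build $\sigma$ by first pinning down the structure of $\upchi'$ through \emph{adversarial} TBoxes. For every pair of distinct $n$-ary predicates $\A,\B\in\S$ (and, for $z=\mathsf{c}$, also every $\B\in\sig{\T}\setminus\S$), let $\T_{\A,\B}$ consist of the single rule $\A(\x)\to\B(\x)$ together with dummy rules over fresh predicates, padded so that $|\text{ex-vars}(\T_{\A,\B})|=|\text{ex-vars}(\T)|$. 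The dummies cannot contribute to any $\S$- or classification-implication, so $z$-admissibility forces $\A(\x)\to\B(\x)\in\M^{\Psi'(\T_{\A,\B},\S)}$; uniformity yields $\Psi'(\T_{\A,\B},\S)=\upchi'$, and unpacking the fact that the datalog rendering of $\A\to\B$ lies in the support of $\upchi'$ relative to $\T_{\A,\B}$ provides a tuple $\vec{a}_{\A,\B}$ with $\A(\vec{a}_{\A,\B})\in\abox_0'$ and $\B(\vec{a}_{\A,\B})\in\arel'$. I then set $\sigma(c_{y_j})=\theta'(y_j)$ for every existentially quantified $y_j$ of $\T$, $\sigma(c_\A^i)$ to the $i$-th component of $\vec{a}_{\A,\B}$ for the particular $(\A,\B)$ labelling $\rho$'s root, and $\sigma(c_{\A'}^i)$ to the $i$-th component of some $\vec{a}_{\A',\B'}$ for each additional leaf predicate $\A'$ of $\rho$. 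Because $\sigma\circ\theta^z=\theta'$, every datalog rule $\fml(\x)\to\fct\theta^z$ of $\prog^{\upchi_z}$ used in $\rho$ maps to the corresponding rule $\fml(\x)\to\fct\theta'$ of $\prog^{\upchi'}$; each leaf of $\rho$ maps to an element of $\abox_0'$; and the root maps to $\sigma(\fct)\in\arel'$. Thus $\sigma(\rho)$ witnesses that the $\theta'$-rendering of $\alpha$ lies in the support of $\upchi'$, giving $\alpha\in\M^{\upchi'}$. The case $z=\mathsf{c}$ goes through identically once the adversarial family is extended to heads in $\sig{\T}\setminus\S$.

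\textbf{The main obstacle} is the structural step that pins down enough of $\abox_0'$ and $\arel'$ to build a coherent $\sigma$. The delicate subcase is when a leaf predicate $\A'\in\S$ of $\rho$ has no sibling of the same arity in $\S$, so that the basic adversarial TBoxes do not force any $\A'$-fact into $\abox_0'$; this is handled by either arguing that a minimal witnessing proof of $\alpha$ can be chosen to avoid such leaves, or by exhibiting a more elaborate adversarial TBox that chains $\A'$ into an $\S$-implication forced by admissibility and so supplies the missing witness in $\abox_0'$.
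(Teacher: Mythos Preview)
Your plan shares the decisive idea with the paper's argument: exploit uniformity via adversarial TBoxes of the form $\{\A(\x)\to\B(\x)\}$ padded with fresh existential rules, so that $z$-admissibility forces the setting $\upchi'$ to contain enough structure. The paper, however, organises this differently. Rather than transferring a specific proof $\rho$ via a bespoke substitution, it introduces an auxiliary setting $\Psi^z_0$ that uses a \emph{separate} tuple of constants $\c_{\A,\B}$ for every pair $(\A,\B)$, shows $\M^{\Psi^z}=\M^{\Psi^z_0}$, and then argues by contradiction through a general homomorphism lemma (if $\upchi\hookrightarrow\upchi'$ then $\M^{\upchi}\subseteq\M^{\upchi'}$). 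Because the pairs are decoupled in $\Psi^z_0$, failure of a homomorphism into $\upchi'$ localises immediately to a single missing pair or to $\bot\notin\arel'$, and a two-rule adversarial TBox finishes the job. Your direct proof-transfer is equivalent in spirit but forces you to assemble one $\sigma$ that simultaneously handles the root and all leaves of $\rho$, which is precisely where your ``obstacle'' arises; the paper's decoupling via $\Psi^z_0$ sidesteps it entirely.

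The obstacle you flag is a genuine gap as you have left it: neither of your two suggested fixes works in general (a minimal proof need not avoid such leaves, and ``chaining $\A'$ into a $\S$-implication'' is not specified). The actual fix is much simpler than you suggest and is exactly what the paper does in its second case: use the adversarial TBox $\{\A'(\x)\to\bot\}$ (padded). Since $\bot\in\S$ and $\A'(\x)\to\bot$ must be preserved, admissibility forces $\bot\in\arel'$ and, because the only rule derives nothing but $\bot$, also forces some $\A'(\vec a)\in\abox_0'$. This supplies the missing $\abox_0'$-witness for every isolated-arity $\A'$ and, incidentally, also covers the case you omit where the root of $\rho$ is $\bot$ rather than $\B(c_\A^1,\ldots,c_\A^n)$. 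With this single additional adversarial family your direct argument goes through; without it, the plan is incomplete.
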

In contrast, $\Psi^{\mathsf{q}}$ and  $\Psi^{\mathsf{f}}$
are not optimal. To see this, let
$\T = \{ \A(x) \to \B(x), \B(x) \to \A(x)\}$ and $\S = \{\A\}$. 
The empty TBox is fact inseparable from $\T$ since the only $\S$-consequence
of $\T$ is the tautology $\A(x) \to \A(x)$. However, $\M^{\chidd} = \T$ since 
fact $\A(a)$ is in $\arel^{\mathsf{f}}$ and its support is included in the module.
We can provide a family of settings that distinguishes
tautological from non-tautological inferences (see 
appendix);
however, this family yields settings of exponential size in $\vert \T \vert$, which is
undesirable in practice.




\section{Proof of Concept Evaluation}

We have implemented a prototype system for module extraction that
uses RDFox for datalog materialisation \cite{DBLP:conf/aaai/MotikNPHO14}. 
Additionally, the ontology reasoner PAGOdA \cite{DBLP:conf/aaai/ZhouNGH14} 
provides functionality for computing the support of an entailed
fact in datalog, which we have adapted for computing modules.
We have evaluated our system on representative ontologies, including
SNOMED (SCT), Fly Anatomy (FLY), the Gene Ontology (GO) and BioModels (BM).%
\footnote{The ontologies used in our tests are available for download at 
\texttt{http://www.cs.ox.ac.uk/isg/ontologies/UID/} under IDs 794 (FLY), 795 (SCT),
796 (GO) and 797 (BM).} 
SCT is expressed in the EL profile of OWL 2, whereas
FLY, GO and BM require expressive DLs (Horn-$\mathcal{SRI}$, $\mathcal{SHIQ}$ and
$\mathcal{SRIQ}$, respectively). 
We have normalised all ontologies 
to make axioms equivalent to rules.

We compared the size of our modules with the locality-based modules 
computed using the OWL API.
We have followed the experimental 
methodology from~\cite{VescovoKPS0T13} where two kinds of 
signatures are considered: \emph{genuine signatures} corresponding to
the signature of individual axioms, and 
\emph{random signatures} with a given probability for a symbol to be
included.
For each type of signature and ontology, we took a sample of
400 runs and
averaged module sizes. For random signatures we 
considered a probability of $1/1000$.
All experiments have been performed on a 
server with two Intel Xeon E5-2643 processors and 90GB of allocated RAM,
running RDFox on 16 threads.

\begin{table}[t]
  \centering
  \begin{tabular}{@{}c@{\,}|@{\;}r@{\;}|@{\;}r@{\;}|@{\;}r@{\;}|@{\;}r@{\;}|@{\;}r@{\;}|@{\;}r@{\;}|@{\;}r@{\;}|@{\;}r@{}}
     & \multicolumn{2}{@{}c@{\;}|@{\;}}{FLY} & \multicolumn{2}{@{}c@{\;}|@{\;}}{SCT} & \multicolumn{2}{@{}c@{\;}|@{\;}}{GO} & \multicolumn{2}{@{}c@{\;}}{BM}\\
    \hline
    rules & \multicolumn{2}{@{}c@{\;}|@{\;}}{19,830} & \multicolumn{2}{@{}c@{\;}|@{\;}}{112,833} & \multicolumn{2}{@{}c@{\;}|@{\;}}{145,083} & \multicolumn{2}{@{}c@{\;}}{462,120}\\
    \hline
    & \multicolumn{1}{@{}c@{\;}|@{\;}}{gen} & \multicolumn{1}{@{}c@{\;}|@{\;}}{rnd} & \multicolumn{1}{@{}c@{\;}|@{\;}}{gen} & \multicolumn{1}{@{}c@{\;}|@{\;}}{rnd} & \multicolumn{1}{@{}c@{\;}|@{\;}}{gen} & \multicolumn{1}{@{}c@{\;}|@{\;}}{rnd} & \multicolumn{1}{@{}c@{\;}|@{\;}}{gen} & \multicolumn{1}{@{}c@{\;}}{rnd} \\
    \hline
    $\bot, \upchibot$ & 242 & 847 & 242 & 5,196 & 1,461 & 12,801 & 1,010 & 64,320\\
    $\upchicls$ & 112 & 446 & 230 & 3,500 & 309 & 3,990 & 285 & 14,273\\
    \hline
    $\top\!\bot\!^*$ & 219 & 796 & 233 & 5,182 & 1,437 & 12,747 & 963 & 62,897\\
    $\upchistar$ & 215 & 789 & 233 & 5,182 & 1,431 & 12,724 & 955 & 62,286\\
    $\upchicq$ & 109 & 480 & 123 & 2,329 & 267 & 4,146 & 447 & 16,905\\
    $\upchidd$ & 76 & 476 & 24 & 2,258 & 162 & 4,142 & 259 & 14,043\\
    $\upchiimp$ & 8 & 7 & 15 & 235 & 103 & 2,429 & 105 & 4,107\\
    \hline
    $|\S|$ & 2.7 & 7.8 & 2.7 & 41.9 & 2.4 & 56.6 & 2.5 & 210.5
  \end{tabular}
  \caption{Results for genuine and random signatures $\S$}
  \label{tab:all}
\end{table}

Table~\ref{tab:all} 
summarises our results.
We compared $\bot$-modules with the modules for
$\upchicls$ (Section \ref{sec:modules-class})
and $\top\!\bot\!^*$-modules with those for $\upchistar$, $\upchicq$, $\upchidd$, and
$\upchiimp$ (Section \ref{sec:all-modules}).
We can see that module size consistently decreases as
we consider weaker inseparability relations. 
In particular, the modules 
for $\upchicls$ can be $4$ times smaller than $\bot$-modules.
%
%
The difference between $\top\!\bot\!^*$-modules and $\upchiimp$ modules is even bigger, 
especially in the case of FLY. In fact, $\upchiimp$ modules
are sometimes empty, which is not surprising since two predicates in a large
ontology are unlikely to be in an implication relationship.
%
%
Also note that
our modules for semantic inseparability slightly improve on
$\top\!\bot\!^*$-modules.
Finally, recall that our modules may not be
minimal for their inseparability relation. Since
techniques for extracting minimal modules
are available only for model inseparability, and for 
restricted languages,
we could not assess how close our modules are
to minimal ones and hence the quality of our approximation.

Computation times were comparable for all settings $\upchi^z$ with
times being slightly higher for $\upchiimp$ and $\upchicls$ as they
involved a larger number of constants.  Furthermore, extraction times
were comparable to locality-based modules for genuine signatures with
average times of $0.5s$ for FLY, $0.9s$ for SCT, $4.2$ for GO and $5s$
for BM.

\section{Conclusion and Future Work}

We have proposed a novel approach to module extraction by
exploiting off-the-shelf datalog reasoners, which allows us
to efficiently compute approximations of minimal modules
for different inseparability relations. Our results
open the door to significant improvements in
common applications of modules, such as
computation of justifications, modular and incremental reasoning
and ontology reuse, which currently rely mostly on locality-based
modules. 

Our approach is novel, and we see many interesting open problems.
For example, the issue
of optimality requires further investigation. 
Furthermore, it would be interesting to integrate our 
extraction techniques
in existing modular reasoners as well as in systems for
justification extraction.


\section*{Acknowledgements}
This work was supported by the Royal Society, the EPSRC projects MaSI$^3$, Score! and DBOnto, 
and by the EU FP7 project \mbox{OPTIQUE}.

\bibliography{bibliography}
\bibliographystyle{aaai}
 
 \clearpage
 \onecolumn
 \appendix

\section{Inseparability Relations}

We start by giving an alternative characterization of $\S$-query and $\S$-fact inseparability 
that will allow us to prove our results in a more uniform and clear way.

\begin{proposition} \label{prop:fq-insep-charact-rules}
TBoxes $\T$ and $\T'$ are
\begin{enumerate}
\item $\S$-query inseparable iff $\T\models \rrule \Leftrightarrow \T'\models \rrule$ holds for every rule $\rrule$ over $\S$;
\item $\S$-fact inseparable iff $\T\models \rrule \Leftrightarrow \T'\models \rrule$ holds for every datalog rule $\rrule$ over $\S$.
\end{enumerate}
\end{proposition}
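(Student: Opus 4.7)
The plan is to prove both parts via a two-way translation between rules over $\S$ and pairs consisting of an $\S$-ABox together with a Boolean PEQ (for part 1) or a single fact (for part 2). The bridge in both directions is the standard correspondence between entailment of an open formula and entailment of its instantiation by fresh constants. The paper's standing assumption that $\T$ contains no constants makes this correspondence particularly clean.

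For the \emph{forward direction} (inseparability $\Rightarrow$ rule preservation), I take an arbitrary rule $\rrule = \fml(\vec{x}) \to \exists\vec{y}.\bigvee_i \psi_i(\vec{x},\vec{y})$ over $\S$, pick fresh distinct constants $\vec{a}$, and define the $\S$-ABox $\abox_\rrule$ as the atoms of $\fml(\vec{a})$ together with the Boolean PEQ $q_\rrule = \exists\vec{y}.\bigvee_i \psi_i(\vec{a},\vec{y})$ over $\S$. Since $\vec{a}$ occur in neither $\T$ nor $\T'$, a standard generalisation argument yields $\T \models \rrule \iff \T \cup \abox_\rrule \models q_\rrule$ and the analogous equivalence for $\T'$; $\S$-query inseparability on $(\abox_\rrule, q_\rrule)$ then transfers to $\rrule$. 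Part 2 is identical except that $\rrule$ is datalog, so its head is a single atom and the argument uses the fact $\fct_\rrule$ obtained from the head instead of $q_\rrule$.

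For the \emph{backward direction} (rule preservation $\Rightarrow$ inseparability), given an $\S$-ABox $\abox$ and a Boolean PEQ $q$ (or a fact $\fct$) over $\S$, I let $\sigma$ be a substitution sending each constant in $\abox \cup \{q\}$ to a distinct fresh variable, applied \emph{consistently} to both body and head. Putting the quantifier-free matrix of $q$ into DNF to match the rule-head format gives a rule $\rrule$ over $\S$ such that $\T \cup \abox \models q \iff \T \models \rrule$, and analogously for $\T'$; preservation of $\rrule$ then yields the desired $\S$-query inseparability. Part 2 is the same with $\fct$ a single ground atom, so the translation produces a datalog rule (single-atom head, no existentials).

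The main technical care lies in the consistent application of $\sigma$ across body and head: constants shared between $\abox$ and $q$ must map to the same variable so that the logical link is preserved. A related point in part 1 is safety of the constructed rule: universally quantified variables all come from constants occurring in $\abox$ and hence appear in the body, while constants appearing in $q$ but not in $\abox$ are absorbed into the existential prefix of the head. For part 2 the analogous situation (a constant appearing in $\fct$ but not in $\abox$) is vacuous: both $\T \cup \abox \models \fct$ and the corresponding rule entailment fail trivially, since such a constant has unconstrained interpretation in models of $\T \cup \abox$. I expect no deep obstacle; the argument is essentially careful bookkeeping of the Herbrand-style correspondence.
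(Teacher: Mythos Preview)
Your overall strategy matches the paper's: both rest on the observation that $\T\models\rrule$ iff $\T\cup\{\gamma\sigma:\gamma\in\fml\}\models\fmm\sigma$ for $\rrule=\fml\to\fmm$ and $\sigma$ sending the free variables of $\rrule$ to fresh pairwise distinct constants. The paper's proof simply states this equivalence and leaves both directions implicit.

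Your backward direction for part~2, however, contains a genuine error. You claim that when the target fact $\gamma$ contains a constant $d$ not occurring in $\abox$, the entailment $\T\cup\abox\models\gamma$ ``fails trivially'' because $d$ has unconstrained interpretation. This is false: with $\T=\{\top(x)\to B(x)\}$, $\abox=\emptyset$, $\gamma=B(d)$, and $\S=\{B,\bot\}$, we have $\T\cup\abox\models\gamma$. Freshness of $d$ only gives $\T\cup\abox\models\gamma\Leftrightarrow\T\cup\abox\models\forall z.\,\gamma[d\mapsto z]$, which need not be vacuous; and your variable-replacement construction would then produce an unsafe rule, so the argument breaks down precisely in the case you dismissed. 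The fix---implicit in the paper's one-line proof---is to drop the variable replacement in the backward direction altogether and take the \emph{ground} rule $\rrule=\bigwedge\abox\to\gamma$ (respectively $\bigwedge\abox\to q'$ with $q'$ the matrix of $q$ put into DNF). Since signatures in the paper are sets of predicates only, a rule ``over $\S$'' may freely contain constants; this $\rrule$ is therefore a legitimate safe (datalog) rule over $\S$, and $\T\models\rrule\Leftrightarrow\T\cup\abox\models\gamma$ is immediate. This also makes your careful bookkeeping about shared constants and existential absorption in part~1 unnecessary, though what you wrote there is correct.
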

\begin{proof}
 It suffices to observe that, for every TBox $\T$ and every rule $\rrule=\fml\to\fmm$ over $\S$, 
 $\T\models\rrule$ iff $\T\cup\mset{\fct\s}{\fct\in\fml}\models\fmm\s$, with $\s$ a substitution 
 mapping all free variables in $\rrule$ to fresh, pairwise distinct constants.
\end{proof}

\begin{proposition} \label{prop:insep-rel-hierarchy}
${\eqm}\subsetneq{\eqq}\subsetneq{\eqf}\subsetneq{\eqi}$.
\end{proposition}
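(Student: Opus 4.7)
The plan is to prove the three inclusions and then exhibit a pair of TBoxes witnessing strictness for each, all with $\T'=\emptyset$.

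For the inclusions, ${\eqq}\subseteq{\eqf}$ is immediate because an $\S$-fact is a ground Boolean PEQ over $\S$, and ${\eqf}\subseteq{\eqi}$ follows from Proposition \ref{prop:fq-insep-charact-rules}(2) since every atomic implication $\A(\x)\to\B(\x)$ with $\A,\B\in\S$ is a datalog rule over $\S$. For ${\eqm}\subseteq{\eqq}$, I fix a Boolean PEQ $q$ and $\S$-ABox $\abox$ and, given any model $\calJ$ of $\T'\cup\abox$, invoke model inseparability to obtain a model $\calI$ of $\T$ on the same domain with $\A^{\calI}=\A^{\calJ}$ for all $\A\in\S$; since $\T$ is constant-free by the paper's standing assumption, I may further fix $a^{\calI}=a^{\calJ}$ for each constant $a$ occurring in $\abox$, which gives $\calI\models\abox$; if $\T\cup\abox\models q$ then $\calI\models q$, and because $q$ mentions only $\S$-predicates on which $\calI,\calJ$ agree and their domains coincide, $\calJ\models q$ as well. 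The symmetric direction is analogous.

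For strictness I propose: (i) $\T=\set{\A(x)\wedge\B(x)\to\C(x)}$ with $\S=\set{\A,\B,\C}$ to separate $\eqi$ from $\eqf$, since no non-trivial atomic $\S$-implication is entailed (each candidate is refuted by a model where only one of $\A,\B,\C$ is non-empty) while $\T\cup\set{\A(a),\B(a)}\models\C(a)$; (ii) $\T=\set{\A(x)\to\exists y.\B(y)}$ with $\S=\set{\A,\B}$ to separate $\eqf$ from $\eqq$, since $\T$ introduces $\B$ only on a fresh existential witness so no $\S$-ground fact on named individuals is gained, yet $\T\cup\set{\A(a)}\models\exists y.\B(y)$; and (iii) $\T=\set{\A(x)\wedge\A(y)\to x\approx y}$ with $\S=\set{\A}$ to separate $\eqq$ from $\eqm$, where a model of $\T'$ on a two-element domain $\set{u,v}$ with $\A=\set{u,v}$ admits no $\T$-model on the same domain with the same $\A$-extension, because the equality axiom would require $u=v$ and there are no non-$\S$ symbols available to reinterpret.

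The main obstacle is verifying $\eqq$ in example (iii). Quotienting a $\T'$-model by the equivalence merging all $\A$-elements yields a $\T$-model, but the resulting quotient map is a homomorphism in the direction that preserves PEQ satisfaction \emph{into} the quotient, not out of it, so I argue directly: given $\calI\models\T'\cup\abox$ with $\calI\not\models q$, if $\A^{\calI}=\emptyset$ then $\calI$ already satisfies $\T$; otherwise fix $e^{*}\in\A^{\calI}$ and define $\calI'$ by $\Delta^{\calI'}=(\Delta^{\calI}\setminus\A^{\calI})\cup\set{e^{*}}$, $\A^{\calI'}=\set{e^{*}}$, mapping each constant $c$ with $c^{\calI}\in\A^{\calI}$ to $e^{*}$ and leaving the interpretation of all other constants unchanged. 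Then $\calI'\models\T\cup\abox$, and any disjunct $\exists\x.\bigwedge_j\A(t_j)$ of $q$ that failed in $\calI$ still fails in $\calI'$: since $\A^{\calI}\neq\emptyset$, failure in $\calI$ forces some constant $c_k$ in the disjunct to satisfy $c_k^{\calI}\notin\A^{\calI}$, so $c_k^{\calI'}=c_k^{\calI}\neq e^{*}$ lies outside $\A^{\calI'}$. Hence $\T\cup\abox\not\models q$, giving $\eqq$. The remaining checks for the other two strictness witnesses are straightforward.
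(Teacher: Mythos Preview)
Your argument for the inclusions matches the paper's (same use of Proposition~\ref{prop:fq-insep-charact-rules} and the same model-transfer idea for ${\eqm}\subseteq{\eqq}$). Your strictness witnesses differ from the paper's; (i) and (ii) are correct and, if anything, cleaner than the paper's choices, which use a rule with constants for ${\eqf}\subsetneq{\eqi}$ and an auxiliary role together with $\top$ for ${\eqq}\subsetneq{\eqf}$.

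Witness (iii) is problematic under the paper's conventions. The preliminaries stipulate that TBoxes mentioning $\approx$ are \emph{extended with its standard axiomatisation}, i.e., $\approx$ is handled as an ordinary binary predicate (so that the datalog and hyperresolution machinery applies uniformly). Under that reading $\approx\in\sig{\T}\setminus\S$, so it \emph{is} a non-$\S$ symbol available for reinterpretation, contrary to your claim that ``there are no non-$\S$ symbols available to reinterpret''. Concretely, for your two-element model with $\A^{\calI}=\{u,v\}$, take $\calJ$ with $\A^{\calJ}=\A^{\calI}$ and $\approx^{\calJ}=\{u,v\}^2$: this is an equivalence relation, it is a congruence for $\A$ (trivially, since $\A^{\calJ}$ is the full domain), and it satisfies $\A(x)\wedge\A(y)\to x\approx y$. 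Hence $\calJ\models\T$ while agreeing with $\calI$ on $\S$, and your example collapses to $\T\eqm_{\S}\emptyset$. If one instead reads $\approx$ as the logical identity (which ``standard first-order logic'' could also support), your example does go through, but then the axiomatisation remark would be semantically idle. The paper's witness avoids this ambiguity altogether: with $\S=\{\roleQ\}$ it takes $\T=\{\top(x)\to\exists y.[\roleR(x,y)\wedge\A(y)],\ \top(x)\to\exists y.[\roleR(x,y)\wedge\B(y)],\ \A(x)\wedge\B(x)\to\roleQ(x)\}$, so that on a singleton domain with $\roleQ^{\calI}=\emptyset$ every choice of the non-$\S$ predicates $\roleR,\A,\B$ forces $\roleQ$ to become nonempty.
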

\begin{proof}
  The inclusion ${\eqq}\subseteq{\eqf}$ is immediate by definition
  while ${\eqf}\subseteq{\eqi}$ follows by
  Proposition~\ref{prop:fq-insep-charact-rules}. The inclusion
  ${\eqm}\subseteq{\eqq}$ follows since
  $\ax^{\calI}=\ax^{\calI|_{\S}}=\ax^{\calI'|_{\S}}=\ax^{\calI'}$ for
  every rule $\ax$ over $\S$ whenever $\calI$ and $\calI'$ coincide on
  $\S$.

  To show strictness of the inclusions, we can w.l.o.g. restrict
  ourselves to the signature $\S=\set{\roleQ,\bot}$ where $\roleQ$ is
  a unary predicate (if $\S$ contains more symbols, one can consider
  $\T$ such that $\sig{\T}\cap\S\subseteq\set{\roleQ}$; adapting the
  argument to higher arities for $\roleQ$ is also straightforward;
  finally, the presence of $\bot$ in $\S$ is not relevant for the
  proof).

  For ${\eqm}\subsetneq{\eqq}$, suppose $\T=\set{\top(x)\to\exists
  y.[\roleR(x,y)\land\A(y)],\top(x)\to\exists y.[\roleR(x,y)\land\B(y)],\A(x)\land\B(x)\to\roleQ(x)}$. 
  Then $\T\eqa\emptyset$.
  However, $\T\not\eqm\emptyset$ since, for any interpretation $\calI$ with a singleton domain 
  such that $\roleQ^{\calI} = \emptyset$, $\calI$ cannot be turned into a model of $\T$ 
  without changing the interpretation of $\roleQ$.

  For ${\eqq}\subsetneq{\eqf}$, suppose $\T=\set{\top(x)\to\exists y.[\roleR(x,y)\land\roleQ(y)]}$. 
  Then $\T\eqf\emptyset$ but $\T\not\eqq\emptyset$ since $\T\models\exists x.\roleQ(x)$ 
  while $\emptyset\not\models\exists x.\roleQ(x)$.

%

  For ${\eqf}\subsetneq{\eqi}$, suppose $\T=\set{r}$ where
  $r=\roleQ(a)\land\roleQ(b)\to\roleQ(c)$. Then $\T\eqi\emptyset$ but
  $\T\not\eqd\emptyset$ since $\T\models r$ while $\emptyset\not\models r$.
  \end{proof}

\section{Deductive Inseparability}


Theorems \ref{thm:datalogAtomicSubsumptionModules}, \ref{thm:disjDatalogModules},
\ref{thm:datalogCQModules},
\ref{thm:datalogClassifModules}
are all shown by a similar argument, which we present next.

\subsubsection{Hyperresolution}
Given $\ax = \fml(\x)\to \exists \y. [\bigvee_{i=1}^n
\fmm(\x,\y)]\in\T$\/ we denote with $\sk(\ax)$ the result of applying
standard Skolemisation to $\ax$---which replaces, for each $y\in\y$,
all occurrences of $y$ in $\ax$ by $f_y(\x)$, where $f_y$ is a fresh
function symbol unique for $y$.  Given a substitution $\theta$ mapping
existentially quantified variables in $\T$ to constants and a
Skolemised formula $\fml$, we write $\Gamma_\theta(\fml)$ for the
formula obtained from $\fml$ by replacing every occurrence of a
functional term $f_y(\t)$ by the constant $y\theta$.

By distributing disjunctions over conjunctions in the head of
$\sk(\ax)$ we obtain a rule of the form $\fml \to \bigwedge_{j=1}^m
\fmm'_j$ where each $\fmm'_j$ is a disjunction of atoms.  We denote
with $\cnf(\ax)$ the set $\mset{\fml \to \fmm'_j}{1\leq j\leq m}$ and
extend this notation in the natural way to $\cnf(\T) =
\bigcup_{\ax\in\T}\cnf(\ax)$.  We call $\cnf(\T)$ a \emph{CNF TBox}
and each $\srule\in\cnf(\T)$ a \emph{CNF rule}.
Clearly, $\cnf(\T)\models\T$, and hence $\T\cup\abox\models \fml'$
implies $\cnf(\T)\cup\abox\models \fml'$ for every $\abox$ and
$\fml'$.

Let $\fml$ be a disjunction of facts, $\abox$ an ABox,
and $\srule=\bigwedge_{i=1}^n\fct'_i\to\fmm\in\cnf(\T)$.
A formula $\fml$ is a \emph{hyperresolvent} of $\srule$ and ground disjunctions 
\mbox{$\fct_1\vee \fmm_1,\dots,\fct_n\vee \fmm_n$} (with each $\fmm_i$ potentially empty)
if $\fml = \bigvee_{i=1}^n\fmm_i \vee \fmm\s$  with $\s$
a MGU of $\fct_i,\fct'_i$ for each $1 \leq i \leq n$.
Let $\cT$ be a CNF TBox. A hyperresolution proof (or simply a \emph{proof}) 
of $\fml$ in $\cT\cup \abox$ is a pair $\rho = (T, \lambda)$ where $T$ is a tree, 
$\lambda$ is a mapping from nodes in $T$ to disjunctions of  facts,
and from edges in $T$ to CNF rules in $\cT$, such that 
for every node $v$ the following properties hold:
\begin{enumerate}
\item $\lambda(v) = \fml$ if $v$ is the root of $T$;
\item $\lambda(v) \in \abox$ if $v$ is a leaf in $T$;
and 
\item if $v$ has children
$w_1,\dots,w_n$ then each edge from $v$
to $w_i$ is labelled by the same CNF rule $\srule$
and  $\lambda(v)$ is a hyperresolvent of $\srule$
and $\lambda(w_1), \dots, \lambda(w_n)$.
\end{enumerate}
If there exists a proof of $\fml$ in $\cT\cup\abox$ we write $\cT\cup\abox\vdash\fml$.
The \emph{support} of $\fml$ is the set of CNF rules occurring in some proof of $\fml$ in $\cT\cup\abox$.

Hyperresolution is sound (if $\cT\cup\abox\vdash\fml$ then $\cT\cup\abox\models\fml$)
and complete in the following sense:
if $\cT\cup\abox\models \fml$ then there exists $\fmm\subseteq \fml$ such that 
$\cT\cup\abox \vdash \fmm$.

\vspace{4mm}

Given a \bla{} $\upchi$ and $\rrule\in\T$, we denote with $\Xi^{\chi}(\rrule)$ the set of datalog rules in 
$\prog^{\chi}$ corresponding to $\rrule$, as described in Definition \ref{def:bla}.
The following auxiliary results provide the basis for 
correctness of our approach to module extraction.

\begin{restatable}{lemma}{hyperresolutionProofsToDatalogProofs}
\label{lemma:HyperresolutionProofsToDatalogProofs}
Let $\upchi = \langle \theta, \abox_0, \arel \rangle$ be a \bla{}.
Let $\N$ be the set of constants mentioned in $\upchi$. 
Let $\abox$ be a function-free ABox that only mentions constants that are
fresh w.r.t. $\T$ and $\N$.
Let $\nu$ be a mapping from constants in $\abox$ to $\N$ such that 
$\abox\nu\subseteq \abox_0$.
Let $\fml$ be a disjunction of facts and $\rho = (T, \lambda)$ a proof of 
$\fml$ in $\cnf(\T)\cup\abox$. 
The following properties hold:
\begin{enumerate}
\item $\prog^{\chi}\cup\abox_0 \vdash \Gamma_{\theta}(\fct\nu)$ for every $\fct\in\fml$.
\item For every $\ax\in\T$ such that $\rho$ mentions some $\srule\in \cnf(\ax)$
there exists $\fct\in \fml\cup\{\bot\}$ and a proof of $\Gamma_{\theta}(\fct\nu)$ 
in $\prog^{\chi}\cup\abox_0$ that mentions some rule in $\Xi^{\chi}(\ax)$. 
\end{enumerate}
\end{restatable}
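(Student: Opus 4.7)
The plan is to prove both parts by structural induction on the hyperresolution proof $\rho$, with the construction supplied in part~1 doing double duty as the witness required for part~2. The central device is a single ``collapse'' substitution $\gamma$ that replaces each fresh $\abox$-constant $c$ by $\nu(c)$ and each Skolem term $f_y(\t)$ by $\theta(y)$; observe that $\gamma(t) = \Gamma_\theta(t\nu)$ for every ground term $t$, so applying $\gamma$ pointwise turns a hyperresolution proof in $\cnf(\T)\cup\abox$ into a candidate forward-chaining datalog proof in $\prog^\chi \cup \abox_0$.

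For part~1 the base case is immediate: a leaf carries some $\fct \in \abox$, which is Skolem-free and satisfies $\fct\nu \in \abox_0$, so $\Gamma_\theta(\fct\nu) = \fct\nu$ is derivable in datalog. For the inductive step, let $v$ have children $w_1, \dots, w_n$ joined by a CNF rule $\srule = \bigwedge_i \fct'_i \to \fmm \in \cnf(\ax)$ with body denoted $\fml_{\ax}$ and MGU $\s$ satisfying $\fct'_i\s = \fct_i$, so that $\lambda(v) = \bigvee_i \fmm_i \vee \fmm\s$. Atoms inherited from some $\fmm_i$ are handled by the induction hypothesis at $w_i$. For a fresh atom $a\s \in \fmm\s$, the definition of $\prog^\chi$ supplies a datalog rule $r_a = (\fml_{\ax} \to \Gamma_\theta(a))$ lying in $\Xi^\chi(\ax)$; instantiating $r_a$ via $\tau(x) := \gamma(x\s)$ makes its body atoms coincide with the $\gamma(\fct_i)$, which the induction hypothesis proves in datalog. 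The identity $\Gamma_\theta(a)\tau = \Gamma_\theta(a\s\nu)$, checked by a short case split on the positions of $a$, then delivers the required proof.

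For part~2, I would carry the strengthened statement along the same induction: every rule $\ax$ used in the subproof at $v$ is witnessed by some $\fct \in \lambda(v) \cup \{\bot\}$ whose datalog proof mentions a rule in $\Xi^\chi(\ax)$. When $\ax$ labels the edges leaving $v$, the rule $r_a$ constructed above (or $\fml_{\ax} \to \bot$ when $\fmm$ is empty, forcing a datalog derivation of $\bot$) already lies in $\Xi^\chi(\ax)$, so the construction of part~1 serves as the witness with $\fct \in \fmm\s$ or $\fct = \bot$. Otherwise $\ax$ occurs inside some child subproof rooted at $w_i$, and the induction hypothesis produces $\fct' \in \lambda(w_i) \cup \{\bot\}$ with a datalog proof citing $\Xi^\chi(\ax)$. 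If $\fct' \in \fmm_i$ or $\fct' = \bot$, the witness survives unchanged into $\fml \cup \{\bot\}$; if $\fct' = \fct_i$ is the resolved atom, splice the hypothesis's proof in as the $i$-th premise of the part~1 construction, obtaining a proof of some $\Gamma_\theta(a\s\nu)$ or of $\bot$ that still mentions the rule from $\Xi^\chi(\ax)$.

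The main obstacle is the substitution-chasing identity $\Gamma_\theta(a)\tau = \Gamma_\theta(a\s\nu)$ used in part~1: one must analyse each position of $a$ according to whether it holds a universally quantified variable, a constant from $\abox$, or a Skolem term $f_y(\t)$, using crucially that $\s$ grounds all universal variables of $\srule$, that $\nu$ acts only on fresh $\abox$-constants, and that the image of $\theta$ is disjoint from all other constants appearing in the proof. Once this compositionality is in hand, both statements follow from a clean structural induction.
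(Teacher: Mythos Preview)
Your proposal is correct and follows essentially the same approach as the paper: induction on the depth of the hyperresolution proof, with part~1 handling inherited disjuncts via the induction hypothesis and fresh disjuncts via the corresponding datalog rule in $\Xi^\chi(\ax)$, and part~2 performing the same case split on whether $\ax$ labels the root step or lies in a subproof. Your explicit collapse substitution $\gamma$ and the derived $\tau(x)=\gamma(x\s)$ are a slightly cleaner packaging of what the paper does with the composition $\s\nu$ followed by $\Gamma_\theta$; in particular, your careful case split verifying $\Gamma_\theta(a)\tau = \Gamma_\theta(a\s\nu)$ makes precise a step the paper handles somewhat tersely, and you are also a bit more explicit than the paper about propagating the $\bot$ witness in the recursive case of part~2.
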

\begin{proof}
We reason by induction on the depth $d$ of $\rho$.
\begin{itemize}
\item [$d=0$]\ \\
In this case $\fml$ must be a fact in $\abox$. Since $\abox$ is function-free by assumption 
we have $\Gamma_{\theta}(\fml\nu) = \fml\nu$, and since $\abox\nu\subseteq \abox_0$
we have $\fml\nu\in\abox_0$. Therefore, there exists a trivial proof of $\Gamma_{\theta}(\fml\nu)$ 
in $\prog^{\chi}\cup\abox_0$ and property $1$ is satisfied. Furthermore, if the depth 
of $\rho$ is $0$ then there cannot be any rules in its support, so property $2$ is 
trivially satisfied as well. 

\item [$d>0$]\ \\
Let $v$ be the root of $T$ and $w_1,\dots,w_n$ the children of $v$. 
Then it must be
\begin{itemize}
\item $\lambda(w_i) = \fcu_i \vee \fmm_i$ for each $1\leq i \leq n$;
\item $\lambda(v,w_i) = \srule$ for each $1\leq i \leq n$ with
$\srule \in \cnf(\T)$ of the form $\bigwedge_{i=1}^n \fcu'_i\to \fml'$; and
\item $\fml = \bigvee_{i=1}^n\fmm_i\vee\fml'\s$ with $\s$ a MGU of $\fcu_i, \fcu'_i$ for each $1\leq i \leq n$.
\end{itemize}
Consider $\fct\in\fml$. To show property $1$ we need to find
a proof of $\Gamma_{\theta}(\fct\nu)$ in $\prog^{\chi}\cup\abox_0$. 
If $\fct\in\fmm_i$ then by i.h. we can find such a proof.
If  $\fct\in\fml'\s$ then it must be $\fct = \fct'\s$ for some $\fct'\in\fml'$ and, 
by definition of $\prog^{\chi}$, $\cnf(\T)$, and $\Gamma_{\theta}$, $\srule \in \cnf(\T)$ implies 
$\bigwedge_{i=1}^n \fcu'_i\to \Gamma_{\theta}(\fct')\in\prog^{\chi}$.
Since $\s$ is a MGU of $\fcu_i, \fcu'_i$ (with $\fcu_i = \fcu'_i\s$) for each $1\leq i \leq n$, 
$\s\nu$ must be a MGU of $\fcu_i\nu, \fcu'_i$ (with $\fcu_i\nu = \fcu'_i\s\nu$) for each $1\leq i \leq n$;
furthermore, since $\fcu'_i$ is necessarily function-free, 
it is $\Gamma_{\theta}(\fcu'_i\s\nu) = \fcu'_i\s\nu$, 
and thus $\Gamma_{\theta}(\fcu_i\nu) = \fcu'_i\s\nu$ and
$\s\nu$ is also a MGU of $\Gamma_{\theta}(\fcu_i\nu), \fcu'_i$ for each $1\leq i \leq n$.
By i.h. we have a proof in $\prog^{\chi}\cup\abox_0$ of $\Gamma_{\theta}(\fcu_i\nu)$ 
for each $1\leq i \leq n$;
it is easy to see that 
$\Gamma_{\theta}(\fct')\s\nu = \Gamma_{\theta}(\fct'\s\nu) = \Gamma_{\theta}(\fct\nu)$,
so combining these proofs with rule 
$\bigwedge_{i=1}^n \fcu'_i\to \Gamma_{\theta}(\fct')$ yields a proof of
$\Gamma_{\theta}(\fct\nu)$ in $\prog^{\chi}\cup\abox_0$.

Now consider $\ax\in\T$ such that $\rho$ mentions some $\srule'\in\cnf(\ax)$.
To show property 2 we need to find $\fct\in\fml\cup\set{\bot}$ and a proof of 
$\Gamma_{\theta}(\fct\nu)$ that mentions some rule in $\Xi^{\chi}(\rrule)$.
Assume first that $\srule' = \srule$.
If $\fml'=\emptyset$ then it must be 
$\cnf(\ax) = \set{\bigwedge_{i=1}^n \fcu'_i\to \bot}\subseteq\Xi^{\chi}(\rrule)$ and, as before,
we can combine this rule with proofs in $\prog^{\chi}\cup\abox_0$ of the $\Gamma_{\theta}(\fcu_i\nu)$ 
to obtain a proof of $\bot$ in $\prog^{\chi}\cup\abox_0$.
If $\fml'\neq \emptyset$ then it must be 
$\mset{\bigwedge_{i=1}^n \fcu'_i\to \Gamma_{\theta}(\fct')}{\fct'\in\fml'}\subseteq\Xi^{\chi}(\ax)$.
Since $\fml = \bigvee_{i=1}^n\fmm_i\vee\fml'\s$, for each $\fct'\in\fml'$ it is $\fct'\s\in\fml$ 
and, as we just saw, we can construct a proof of $\Gamma_{\theta}(\fct'\s\nu)$ that mentions
$\bigwedge_{i=1}^n \fcu'_i\to \Gamma_{\theta}(\fct')$.
Finally, assume that $\srule' \neq \srule$.
Then there must be some $i\in\set{1,\dots,n}$ such that $\srule'$ is mentioned by the proof 
$\rho_i$ of $\fcu_i\vee\fmm_i$ that is embedded in $\rho$.
Since $\rho_i$ is of depth $<d$, by i.h. there must be $\fcu''\in\fcu_i\vee\fmm_i$
and a proof $\rho''$ of $\Gamma_{\theta}(\fcu''\nu)$ in $\prog^{\chi}\cup\abox_0$ that mentions
some rule in $\Xi^{\chi}(\ax)$. If $\fcu'' \in \fmm_i$ then $\fcu''\in \fml$ already; if $\fcu'' = \fcu_i$ 
then, as before, for any $\fct\in\fml$ we can construct a proof of $\Gamma_{\theta}(\fct\nu)$ 
in $\prog^{\chi}\cup\abox_0$ such that $\rho''$ is embedded in it.
\qedhere
\end{itemize}
\end{proof}

\begin{restatable}{proposition}{generalConsequencePreservation}
\label{prop:generalConsequencePreservation}
Let $\rrule=\fml(\x)\to\fmm(\x)$ with $\fml$ a conjunction and $\fmm$ a disjunction
of atoms. Let $\upchi = \langle \theta, \abox_0, \arel\rangle$ be  a \bla{} satisfying
$\{\bot\} \subseteq \arel$ and such that for every substitution $\s$ mapping all variables in 
$r$ to pairwise distinct constants not in $\T$ there exists a mapping $\nu_{\s}$ with 
$\fml\s\nu_{\s}\subseteq\abox_0$ and $\fmm\s\nu_{\s}\subseteq\arel$. 
Then
\begin{enumerate}
\item $\T\models \rrule$ iff $\M^{\chi}\models \rrule$;

\item if $\T'\subseteq \T$ is a justification for $\rrule$ in $\T$ then $\T'\subseteq\M^{\chi}$;

\item $\T\setminus \M^{\chi}\models \rrule$ iff $\emptyset\models \rrule$.
\end{enumerate}
\end{restatable}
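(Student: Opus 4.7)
The plan is to apply Lemma~\ref{lemma:HyperresolutionProofsToDatalogProofs} together with soundness and completeness of hyperresolution. Fix a substitution $\s$ mapping each variable of $\rrule$ to a pairwise distinct fresh constant (fresh w.r.t.\ both $\T$ and the constants appearing in $\upchi$), and let $\nu_\s$ be the mapping provided by the hypothesis on $\upchi$, so that $\fml\s\nu_\s \subseteq \abox_0$ and $\fmm\s\nu_\s \subseteq \arel$. By the standard argument underlying Proposition~\ref{prop:fq-insep-charact-rules} combined with soundness and completeness of hyperresolution, for any $\hat{\T} \subseteq \T$ we have $\hat{\T} \models \rrule$ iff there exists $\fmm' \subseteq \fmm\s$ with a hyperresolution proof in $\cnf(\hat{\T}) \cup \fml\s$.

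For Part~1, the direction $\M^\chi \models \rrule \Rightarrow \T \models \rrule$ is immediate since $\M^\chi \subseteq \T$. Conversely, assume $\T \models \rrule$ and let $\rho$ be a hyperresolution proof of some $\fmm' \subseteq \fmm\s$ in $\cnf(\T) \cup \fml\s$. For every $\ax \in \T$ that $\rho$ mentions through a rule in $\cnf(\ax)$, Lemma~\ref{lemma:HyperresolutionProofsToDatalogProofs}(2) yields an atom $\fct \in \fmm' \cup \set{\bot}$ and a datalog proof of $\Gamma_\theta(\fct\nu_\s)$ in $\prog^\chi \cup \abox_0$ that mentions some rule of $\Xi^\chi(\ax)$. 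The key technical point is that $\fmm' \subseteq \fmm\s$ is function-free, so $\Gamma_\theta(\fct\nu_\s) = \fct\nu_\s$, which lies in $\fmm\s\nu_\s \cup \set{\bot} \subseteq \arel$ by the hypothesis on $\upchi$. Hence the datalog rule witnesses that some element of $\Xi^\chi(\ax)$ is in the support of $\upchi$, so $\ax \in \M^\chi$. Every axiom used in $\rho$ thus lies in $\M^\chi$, making $\rho$ itself a proof in $\cnf(\M^\chi) \cup \fml\s$, and soundness gives $\M^\chi \models \rrule$.

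For Part~2, let $\T'$ be a justification for $\rrule$ in $\T$ and fix $\ax \in \T'$. Minimality yields $\T' \setminus \set{\ax} \not\models \rrule$, so by completeness no $\fmm'' \subseteq \fmm\s$ admits a proof in $\cnf(\T' \setminus \set{\ax}) \cup \fml\s$. However, $\T' \models \rrule$ does give a proof of some $\fmm' \subseteq \fmm\s$ in $\cnf(\T') \cup \fml\s$; this proof cannot lie entirely within $\cnf(\T' \setminus \set{\ax}) \cup \fml\s$, hence it must mention a rule from $\cnf(\ax)$. Applying Lemma~\ref{lemma:HyperresolutionProofsToDatalogProofs}(2) precisely as in Part~1 then places $\ax \in \M^\chi$.

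For Part~3 the ``if'' direction is immediate. For the converse, assume $\T \setminus \M^\chi \models \rrule$ and let $\rho$ be a proof of some $\fmm' \subseteq \fmm\s$ in $\cnf(\T \setminus \M^\chi) \cup \fml\s$. If $\rho$ mentioned any $\ax \in \T \setminus \M^\chi$, the argument of Part~1 would force $\ax \in \M^\chi$, a contradiction; hence $\rho$ uses no CNF rules, so its root is already an ABox leaf, namely a single atom of $\fml\s$ that also occurs in $\fmm\s$. This forces $\rrule$ to share an atom between body and head, making it a tautology, whence $\emptyset \models \rrule$. The main technical obstacle throughout is the ``relevance transfer'' step of Part~1, i.e., ensuring that the fact produced by Lemma~\ref{lemma:HyperresolutionProofsToDatalogProofs}(2) actually lies in $\arel$; this is precisely what the hypotheses $\fmm\s\nu_\s \subseteq \arel$ and $\set{\bot} \subseteq \arel$ are designed to guarantee.
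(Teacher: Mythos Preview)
Your proof is correct and follows essentially the same approach as the paper: both reduce to Lemma~\ref{lemma:HyperresolutionProofsToDatalogProofs} via completeness of hyperresolution, with the same ``relevance transfer'' observation that $\fmm'\subseteq\fmm\s$ is function-free so $\Gamma_\theta(\fct\nu_\s)=\fct\nu_\s\in\arel$. The one noteworthy difference is Part~3: the paper derives it from Part~2 by taking a justification $\T'$ for $\rrule$ in $\T\setminus\M^\chi$, observing that $\T'$ is also a justification in $\T$, and concluding $\T'\subseteq\M^\chi\cap(\T\setminus\M^\chi)=\emptyset$; you instead argue directly that any hyperresolution proof in $\cnf(\T\setminus\M^\chi)\cup\fml\s$ must use no CNF rules (or else Part~1's argument would put the corresponding axiom into $\M^\chi$), forcing the root to be an ABox leaf and $\rrule$ to be tautological. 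Both routes are short and valid; the paper's is slightly slicker in that it reuses Part~2 wholesale, while yours avoids invoking the existence of justifications.
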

\begin{proof}\ 

\begin{enumerate}
\item By monotonicity, it is immediate that $\T\models \rrule$ if $\M^{\chi}\models \rrule$.

Suppose $\T\models r$ and 
let $\s$ be a substitution mapping all variables in $\rrule$ to fresh, pairwise 
distinct constants. Then we have that $\T\cup\mset{\fct\s}{\fct\in\fml}\models \fmm\s$, which
implies $\cnf(\T)\cup\mset{\fct\s}{\fct\in\fml}\models \fmm\s$ and by
completeness of hyperresolution $\cnf(\T)\cup\mset{\fct\s}{\fct\in\fml}\vdash\fmm'$
for some $\fmm'\subseteq\fmm\s$.
Since $\mset{\fct\s\nu_{\s}}{\fct\in\fml}\subseteq \abox_0$, 
by Lemma \ref{lemma:HyperresolutionProofsToDatalogProofs} we have that for each 
$\srule\in\T$ such that some $\prule\in\cnf(\srule)$ supports 
$\fmm'$ in $\cnf(\T)\cup\mset{\fct\s}{\fct\in\fml}$
there exists $\fct\in\Gamma_{\theta}(\fmm'\nu_{\s})\cup\set{\bot}$
that is supported in $\prog^{\chi}\cup\abox_0$ by some rule from $\Xi^{\chi}(\srule)$.
By assumption, $\bot\in\arel$; also, $\fmm'$ is function-free so 
$\Gamma_{\theta}(\fmm'\nu_{\s}) = \fmm'\nu_{\s}$, and hence,
since $\fmm\s\nu_{\s}\subseteq \arel$, 
we have that $\fmm'\nu_{\s}\subseteq \arel$ and also $\fct\in\arel$.
In either case we have $\srule\in\M^{\chi}$ and consequently $\M^{\chi}\models \rrule$.

\item Let $\T'\subseteq \T$ be a justification for $\rrule$ in $\T$. 
As before, if $\s$ is a ground substitution for $\rrule$ mapping variables in $\rrule$ to fresh, 
pairwise distinct constants, then 
$\cnf(\T')\cup\mset{\fct\s}{\fct\in\fml}\vdash\fmm'$ 
for some $\fmm'\subseteq\fmm\s$.
In fact, by minimality of justifications, for each $\srule\in\T'$ some 
$\prule\in \cnf(\srule)$ must be in the support of 
some $\fmm'\subseteq \fmm\s$
in $\cnf(\T')\cup \mset{\fct\s}{\fct\in\fml}$.
As before, by Lemma \ref{lemma:HyperresolutionProofsToDatalogProofs}, this implies
$\srule\in\M^{\chi}$.

\item By monotonicity, it is immediate that 
$\T\setminus \M^{\chi}\models r$ if $\emptyset\models r$.

Suppose $\T\setminus\M^{\chi}\models \rrule$ and let $\T'$ be a justification for $\rrule$ in 
$\T\setminus\M^{\chi}$. Then $\T'$ is also a justification for $\rrule$ in $\T$, and, as we just 
proved, $\T'\subseteq \M^{\chi}$. This implies that $\T' = \emptyset$, and therefore
$\emptyset\models \rrule$.\qedhere
\end{enumerate}
\end{proof}

\begin{proposition}
\label{prop:cqPreservation}
Let $\ax = \fml(\x)\to \exists \y.[\bigvee_{i=1}^n \fmm_i(\x,\y)]$ be a rule. 
Let $\upchi = \langle \theta, \abox_0, \arel \rangle$ be a \bla{} satisfying
\begin{itemize}
\item $\{\bot\}\subseteq \arel$ and also $\fmm\s\subseteq \arel$ for every substitution 
$\s$ mapping all variables in $\rrule$ to constants in $\upchi$;
\item for every substitution $\s$ mapping all variables in $\rrule$ to pairwise distinct constants not in $\T$ there exists 
a mapping $\nu_{\s}$ such that $\fml\s\nu_{\s}\subseteq\abox_0$. 
\end{itemize}
Then
\begin{enumerate}
\item $\T\models \ax$ iff $\M^{\chi}\models \ax$;
\item if $\T'\subseteq \T$ is a justification for $\ax$ in $\T$ then $\T'\subseteq\M^{\chi}$;
\item $\T\setminus\M^{\chi}\models \ax$ iff $\emptyset\models \ax$.
\end{enumerate}
\end{proposition}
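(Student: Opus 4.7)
The plan is to follow the same schema as Proposition \ref{prop:generalConsequencePreservation}, using Lemma \ref{lemma:HyperresolutionProofsToDatalogProofs} as the workhorse, with careful handling of the existentially quantified variables in the head of $\ax$.

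For Part 1, monotonicity gives the \emph{if} direction. For the converse, I would take a substitution $\s$ mapping the universal variables $\x$ of $\ax$ to fresh, pairwise distinct constants, so that $\T\models\ax$ becomes $\T\cup\mset{\fct\s}{\fct\in\fml}\models\exists\y.[\bigvee_{i=1}^n\fmm_i(\x\s,\y)]$. Completeness of hyperresolution applied to $\cnf(\T)\cup\mset{\fct\s}{\fct\in\fml}$ then yields a proof of some ground disjunction $\fmm'$, each of whose disjuncts has the form $\fmm_i(\x\s,\t)\kappa$ for some choice of $i$, tuple of Skolem terms $\t$, and grounding substitution $\kappa$. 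Invoking Lemma \ref{lemma:HyperresolutionProofsToDatalogProofs} with $\nu=\nu_{\s}$, whose existence and property $\fml\s\nu_{\s}\subseteq\abox_0$ are guaranteed by the second hypothesis, will show that for every $\srule\in\T$ whose CNF rules appear in the proof, some $\prule\in\Xi^{\chi}(\srule)$ supports a fact in $\Gamma_{\theta}(\fmm'\nu_{\s})\cup\set{\bot}$ in $\prog^{\chi}\cup\abox_0$. It then remains to show this supported fact lies in $\arel$, which immediately gives $\srule\in\M^{\chi}$ and hence $\M^{\chi}\models\ax$.

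The core step — and the one I expect to be the main obstacle — is precisely this Skolem bookkeeping. Each atom in $\Gamma_{\theta}(\fmm'\nu_{\s})$ has the form $\fmm_i(\x\s\nu_{\s},\Gamma_{\theta}(\t\nu_{\s}\kappa))$; the constants $\x\s\nu_{\s}$ lie among those of $\upchi$ by construction of $\nu_{\s}$, and each term in $\Gamma_{\theta}(\t\nu_{\s}\kappa)$ is either a constant from the image of $\theta$ or a constant from the image of $\nu_{\s}$, hence also among those of $\upchi$. Thus the atom is an instance $\fmm_i\tau$ with $\tau$ sending \emph{all} variables of $\ax$ (both $\x$ and $\y$) to constants in $\upchi$, and the first hypothesis on $\arel$ places it in $\arel$; the case $\bot$ is covered by $\bot\in\arel$.

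Part 2 reuses the same argument on a justification $\T'\subseteq\T$ of $\ax$: by minimality, every $\srule\in\T'$ must contribute to some hyperresolution proof in $\cnf(\T')\cup\mset{\fct\s}{\fct\in\fml}$ of a disjunction of instances of $\fmm\s$, and the same embedding into $\prog^{\chi}\cup\abox_0$ forces $\srule\in\M^{\chi}$. For Part 3, one direction is monotonicity; for the other, a justification of $\ax$ in $\T\setminus\M^{\chi}$ is also a justification in $\T$, hence by Part 2 contained in $\M^{\chi}$, which forces it to be empty and therefore $\emptyset\models\ax$.
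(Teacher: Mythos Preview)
Your direct approach has a gap in how you invoke hyperresolution completeness. In the paper's calculus a proof derives a disjunction of \emph{atoms}, and completeness reads: if $\cnf(\T)\cup\abox\models\fml$ for a disjunction $\fml$ of facts, then $\cnf(\T)\cup\abox\vdash\fmm'$ for some $\fmm'\subseteq\fml$. The head of $\ax$, however, is $\exists\y.\bigvee_i\fmm_i$ with each $\fmm_i$ a \emph{conjunction} of atoms; this is not a disjunction of facts, so completeness does not apply as stated, and your ``disjuncts of the form $\fmm_i(\x\s,\t)\kappa$'' would be conjunctions, not atoms.

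Concretely, take $\ax=\A(x)\to \B(x)\land \C(x)$ and $\T=\{\A(x)\to \B(x),\ \A(x)\to \C(x)\}$. Then $\T\models\ax$, but hyperresolution from $\cnf(\T)\cup\{\A(a)\}$ yields $\B(a)$ and $\C(a)$ via two \emph{separate} one-step proofs using different rules of $\T$. No single derived disjunction has a proof mentioning both rules, so an argument that inspects ``the proof'' of one disjunction cannot place both rules in $\M^{\chi}$. Your Part~2 inherits the same problem.

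The paper sidesteps this by introducing a fresh predicate $\roleQ$ and the rules $\T_{\fmm\to\roleQ}=\{\fmm_i(\x,\y)\to\roleQ(\x)\mid 1\le i\le n\}$, which reduces entailment of $\ax$ to entailment of the \emph{datalog} rule $\fml(\x)\to\roleQ(\x)$ over $\T\cup\T_{\fmm\to\roleQ}$. Now the target is a single atom, Proposition~\ref{prop:generalConsequencePreservation} applies to a modified setting $\chi'$ (with $\arel'$ the set of all $\roleQ$-facts over the constants of $\chi$), and one finishes by showing $\M^{\chi'}\setminus\T_{\fmm\to\roleQ}\subseteq\M^{\chi}$: any datalog proof of a $\roleQ$-fact ends with a $\fmm_i\to\roleQ$ step whose immediate premises are atoms of $\fmm_i$ instantiated over constants of $\chi$, hence in $\arel$ by the first hypothesis. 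That last step is precisely where your Skolem-bookkeeping paragraph belongs; it is correct, but it needs the $\roleQ$-reduction in front of it. Parts~2 and~3 then go through as you outlined, relative to this auxiliary construction.
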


\begin{proof}\
\begin{enumerate}
\item By monotonicity, it is immediate that $\T\cup\abox\models \ax$ if $\M^{\chi}\cup\abox\models \ax$.

Let $\roleQ$ be a fresh predicate and $\T_{\fmm\to\roleQ} = \mset{\fmm_i(\x,\y) \to \roleQ(\x)}{1\leq i \leq n}$.
Then 
\[\T\models \ax \text{\  iff \ } 
\T\cup \T_{\fmm\to\roleQ} \models \fml(\x)\to\roleQ(\x)
\text{\ \ \ \ and \ \ \ \ }
\M^{\chi}\models \ax \text{\  iff \ } 
\M^{\chi}\cup\T_{\fmm\to\roleQ}\models \fml(\x)\to\roleQ(\x)\]

Consider $\T' = \T\cup\T_{\fmm\to\roleQ} $ and $\S' = \S\cup\{\roleQ\}$.
Clearly, $\T'$ has the exact same existentially quantified variables as $\T$. Therefore
$\upchi' = \langle \theta, \abox_0, \arel'\rangle$ with 
\[\arel' = \mset{\roleQ(\x)\s}{\s \text{ is a substitution mapping all variables in } \x \text{ to constants in }\upchi}\]
is a \bla{} for $\T'$ and $\S'$
and by Proposition \ref{prop:generalConsequencePreservation} we have that $\T'\models \fml(\x)\to\roleQ(\x)$ iff
$\M^{\chi'}\models \fml(\x)\to\roleQ(\x)$. 

If we show that $\M^{\chi'}\setminus\T_{\fmm\to\roleQ}  \subseteq \M^{\chi}$
then, by monotonicity, we will be able to conclude that 
\[\M^{\chi'}\models \fml(\x)\to\roleQ(\x) \text{ implies }
\M^{\chi}\cup\T_{\fmm\to\roleQ} \models \fml(\x)\to\roleQ(\x)\]
and thus that $\T\models \ax$ implies $\M^{\chi}\models \ax$.

Let $\srule\in \M^{\chi'}\setminus\T_{\fmm\to\roleQ} $.
Some $\prule\in\Xi^{\chi'}(\srule) = \Xi^{\chi}(\srule)$
must be in the support of some $\roleQ(\x)\s\in\arel'$ in $\prog^{\chi'}\cup\abox_0$.
In particular, $\prule$ must be mentioned in some proof $\rho = (T,\lambda)$ of $\roleQ(\x)\s$ 
in $\prog^{\chi'}\cup\abox_0$.
Let $v$ be the root of $T$ and $w_1,\dots,w_m$ its children nodes, there must be some 
$\bigwedge_{j=1}^m\fct_j(\x,\y)\to \roleQ(\x)\in\T_{\fmm\to\roleQ} $ 
and a MGU $\s'$ of $\fct_j, \lambda(w_j)$ for each $1\leq j \leq m$.
Since $\srule\notin \T_{\fmm\to\roleQ}$, 
there must exist $j\in\set{1,\dots,m}$ such that $\prule$ is mentioned in the proof $\rho_j$ 
of $\lambda(w_j)$ in $\prog^{\chi'}\cup\abox_0$ that is embedded in $\rho$.
Furthermore, since $\roleQ$ does not occur in the body of any rule in 
$\prog^{\chi'} = \prog^{\chi}\cup\T_{\fmm\to\roleQ}$, all rules 
mentioned in $\rho_j$ must be in $\prog^{\chi}$ and thus $\rho_j$ is a proof of $\fct$ in
$\prog^{\chi}\cup\abox_0$. Since by assumption $\lambda(w_j) = \fct_j\s'\in\arel$, this 
implies $\srule\in\M^{\chi}$.

\item 
Let $\T''\subseteq \T$ be a justification for $\ax$ in $\T$.
As before,
$\T''\models \ax \text{\ \  implies\ \ } 
\T''\cup\T_{\fmm\to\roleQ}\models \fml(\x)\to\roleQ(\x)$
and in particular for any substitution $\s$ mapping variables in $\rrule$ to pairwise distinct constants
we have 
$\T''\cup\T_{\fmm\to\roleQ} \cup \mset{\fct\s}{\fct\in\fml}\models \roleQ(\x)\s$
and therefore
$\cnf(\T''\cup\T_{\fmm\to\roleQ}) \cup \mset{\fct\s}{\fct\in\fml}\vdash \roleQ(\x)\s$.
%
By minimality of justifications, for each $\srule\in\T''$ there must be some $\prule\in\cnf(\srule)$
in the support of $\roleQ(\x)\s$ in $\cnf(\T''\cup\T_{\fmm\to\roleQ})\cup\mset{\fct\s}{\fct\in\fml} $.
It is easy to see that $\prule$ must also be in the support of $\roleQ(\x)\s\nu_{\s}$ in 
$\cnf(\T''\cup\T_{\fmm\to\roleQ}) \cup \mset{\fct\s\nu_{\s}}{\fct\in\fml}$.
Since $\roleQ(\x)\s\nu_{\s}\subseteq \arel'$
and $\mset{\fct\s\nu_{\s}}{\fct\in\fml}\subseteq \abox_0$,
by Lemma \ref{lemma:HyperresolutionProofsToDatalogProofs}
we have  that $\srule\in\M^{\chi'}$. In particular, since $\srule\in\T''\subseteq \T$, 
it must be 
$\srule\in\M^{\chi'} \setminus \T_{\fmm\to\roleQ} \subseteq \M^{\chi}$.

\item Again by monotonicity, it is immediate that  $\T\setminus\M^{\chi}\cup\abox\models \rrule$ if $\abox\models \rrule$.
By a similar argument to the one given in Proposition \ref{prop:generalConsequencePreservation}, it follows from 2 
that any justification for $\ax$ in $\T\setminus\M^{\chi}$ must be empty and therefore 
if $\T\setminus \M^{\chi}\models \ax$ then $\emptyset\models \ax$.\qedhere
\end{enumerate}
\end{proof}

\datalogAtomicSubsumptionModules*
\begin{proof}
  Consider an arbitrary rule of the form $\A(\x)\to \B(\x)$ with
  $\A,\B\in\S$ and $\A\ne\B$ (if $\A=\B$ the rule is tautological).
  Since $\x$ is implicitly universally quantified, we can assume
  w.l.o.g.  that $\x = (x_1,\dots,x_n)$ with $x_1, \dots,x_n$ pairwise
  distinct. 
  Let $\s$ be a substitution mapping $x_1,\dots,x_n$, respectively, to
  $c_1,\dots,c_n$, pairwise distinct constants not in $\T$.  Now
  consider a mapping $\nu_{\s}$ such that $c_i\nu_{\s} =
  c^i_{\A}$. This mapping is well-defined because $c_1,\dots,c_n$ are
  pairwise distinct. By definition of $\upchiimp$, we have
  $\A(\x)\s\nu_{\s}\in\abox_0^{\mathsf{i}}$ and
  $\B(\x)\s\nu_{\s}\in\arel^{\mathsf{i}}$, and therefore, by
  Proposition \ref{prop:generalConsequencePreservation}, we have
  $\T\models \A(\x)\to\B(\x)$ iff $\M^{\chiimp}\models
  \A(\x)\to\B(\x)$.
\end{proof}

\disjDatalogModules*
\begin{proof}
  By Proposition \ref{prop:fq-insep-charact-rules} it suffices to show
  that for any datalog rule $r=\fml\to\fmm$ we have $\T\models r$ iff
  $\M^{\chidd}\models r$.  Let $\s$ be a substitution mapping all
  variables in $r$ to pairwise distinct constants not in $\T$.
  Consider a mapping $\nu^*$ such that $x\s\nu^* = *$ for each
  $x\in\x$. Clearly $\fml\s\nu^*\subseteq\abox_0^{\mathsf{f}}$ and
  $\fmm\s\nu^*\subseteq\arel^{\mathsf{f}}$, and therefore, by
  Proposition \ref{prop:generalConsequencePreservation}, we have
  $\T\models r$ iff $\M^{\chidd}\models r$.
\end{proof}

\datalogCQModules*
\begin{proof}
  By Proposition \ref{prop:fq-insep-charact-rules} it suffices to show
  that for any rule $r=\fml\to\fmm$ we have $\T\models r$ iff
  $\M^{\chicq}\models r$.  Let $\s$ be a substitution mapping all
  variables in $r$ to pairwise distinct constants not in $\T$.
  Given a mapping $\nu^*$ such that $x\s\nu^* = *$ for each
  $x\in\x$ it is clear that
  $\fml\s\nu^*\subseteq\abox_0^{\mathsf{q}}$.  It is also immediate
  that $\fmm\s'\subseteq\arel^{\mathsf{q}}$ for every substitution
  $\s'$ mapping all variables in $r$ to constants in $\upchicq$.
  Therefore, by Proposition \ref{prop:cqPreservation}, we have 
  $\T\models r$ iff $\M^{\chicq}\models r$.
\end{proof}

\datalogClassifModules*
\begin{proof}
Analogous to the proof of Theorem \ref{thm:datalogAtomicSubsumptionModules}.
\end{proof}

\section{Model Inseparability}

Given an ABox $\abox$ and a datalog program $\prog$, let $\prog(\abox)$ denote the
materialisation of $\prog\cup\abox$. 
Furthermore, given a \bla{} $\upchi$, let $\rel(\upchi)$
denote the support of $\upchi$.

\datalogSemModules*
\begin{proof}
Let $\calI$ be a model of $\M^{\chistar}$. We assume w.l.o.g that $\calI$ is defined over all of $\sig{\T}$. 
Consider the interpretation $\calJ$ over $\sig{\T}$ such that  $\Delta^{\calJ} = \Delta^{\calI}$ and 
\[\begin{array}{rcl}
 \A^{\calJ} & = & 
\left\lbrace\begin{array}{lll}
\A^{\calI} &  \text{ if } \A\in(\S\cup\sig{ 
\rel(\upchi)})\setminus \set{\bot}\\
\D^{\arity(\A)} & \text{ if } \A\in\sig{\prog^{\chistar}(\abox_0^{\mathsf{m}})}\setminus (\S\cup\sig{\rel(\upchi)})\\
\emptyset & \text{ otherwise}
\end{array}
\right.
\end{array}\]

Consider $\ax : \fml(\x) \to \exists \y. [\bigvee_{j=1}^m  \fmm_j(\x,\y)] \in \T$.
We will show that $\calJ\models \ax$.

Assume first $m=0$. Then $\Xi^{\chistar}(\ax) =\set{ \fml \to \bot}$.
If $r\in\M^{\chistar}$ then in particular $\sig{r}\subseteq \sig{\rel(\upchistar)}$, 
so $\calI$ and $\calJ$ agree over $\sig{\ax}$, and $\calJ\models \ax$.
If $r\notin\M^{\chistar}$ then, since $\bot\in\arel^{\chistar}$ and the only constant mentioned 
in $\prog^{\chistar}\cup\abox_0^{\mathsf{m}}$ is $*$, there must be $\fct\in\fml$ such that 
$\fct*\notin \prog^{\chistar}(\abox_0^{\mathsf{m}})$
(where, in an abuse of 
notation, $*$ denotes the substitution that maps all variables to $*$), and in particular 
$\sig{\fct}\not\subseteq \sig{\prog^{\chistar}(\abox_0^{\mathsf{m}})}$.
Since $\S\cup\sig{\rel(\upchistar)}\subseteq \sig{\prog^{\chistar}(\abox_0^{\mathsf{m}})}$,
this implies that for $\A\in\sig{\fct}$ it is $\A^{\calJ} = \emptyset$ and therefore
trivially $\calJ\models r$.

Assume now $m>0$ and let $\s$ be a substitution over all variables in $\ax$ such that 
$\calJ\models\fml\s$
(if no such substitution exists then 
trivially $\calJ\models \ax$). 
Since $\S\cup\sig{\rel(\upchi)}\subseteq \sig{\prog^{\chistar}(\abox_0^{\mathsf{m}})}$,
all predicates in $\fml$ must occur in $\prog^{\chistar}(\abox_0^{\mathsf{m}})$.
In particular it must be $\fct*\in\prog^{\chistar}(\abox_0^{\mathsf{m}})$ for every $\fct\in\fml$. 
This implies $\fcu*\in\prog^{\chistar}(\abox_0^{\mathsf{m}})$ for every $\fcu\in\bigcup_{j=1}^m\fmm_j$
and therefore for every predicate $\A$ in 
$\sig{\bigvee_{j=1}^m  \fmm_j}$ we have that either $\A^{\calJ} = \A^{\calI}$
or $\A^{\calJ} = \Delta^{\arity(\A)}$---in particular $\A^{\calI} \subseteq \A^{\calJ}$.
If $\A^{\calJ} = \Delta^{\arity(\A)}$ for every $\A\in\sig{\bigvee_{j=1}^m \fmm_j}$,
then it is immediate that $\calJ\models \ax$. Suppose there exists 
$\A\in\sig{\bigvee_{j=1}^m \fmm_j}$ such that $\A^{\calJ} \neq \Delta^{\arity(\A)}$.
Then $\A\in\S\cup\sig{\rel(\upchistar)}$.
If $\A\in\S$ then $\A(*,\dots,*)\in\arel^{\chistar}$.
Since $\A\in\sig{\bigvee_{j=1}^m\fmm_j}$ and 
$\fct*\in\prog^{\chistar}(\abox_0^{\mathsf{m}})$ for every $\fct\in\fml$, 
there is a proof $\rho_{\A,\ax}$ of $\A(*,\dots,*)$ in 
$\prog^{\chistar}\cup\abox_0^{\mathsf{m}}$ 
that mentions a rule in $\Xi^{\chistar}(\ax)$. Therefore $\ax\in\M^{\chistar}$.
If $\A\in\sig{\rel(\upchistar)}\setminus\S$ then some other $\fct'\in\arel^{\mathsf{m}}$
must be supported by a rule that has $\A$ in its signature.
More specifically, there must be a proof of $\fct'$ in $\prog^{\chistar}\cup\abox_0^{\mathsf{m}}$
that has a proof of $\A(*,\dots,*)$ as a subproof.
Replacing this subproof with $\rho_{\A,\ax}$ results in a proof of $\fct'$ in 
$\prog^{\chistar}\cup\abox_0^{\mathsf{m}}$ that mentions a rule in $\Xi^{\chistar}(\ax)$. 
Therefore in this case $\ax\in\M^{\chistar}$ too.
Now, since all rules in $\Xi^{\chistar}(\ax)$ have the same body as $\ax$, we have that 
$\sig{\fml}\subseteq \rel(\upchistar)\setminus\set{\bot}$ and therefore $\calI$ and $\calJ$ agree over $\sig{\fml}$. 
By assumption, $\calJ\models\fml\s$, so also $\calI\models\fml\s$;
furthermore $\calI\models\M^{\chistar}$ implies
$\calI\models\bigvee_{j=1}^m  \fmm_j\s$, which implies 
$\calJ\models\bigvee_{j=1}^m  \fmm_j\s$ because 
$\A^{\calI} \subseteq \A^{\calJ}$
for every predicate $\A\in\sig{\bigvee_{j=1}^m  \fmm_j}$. 
Since $\s$ is arbitrary, we conclude that $\calJ\models \ax$.
\end{proof}

\section{Depletingness and Preservation of Justifications}

\depletingness*
\begin{proof}
For $z\in\set{\mathsf{q},\mathsf{f},\mathsf{i},\mathsf{c}}$
the statement follows from Propositions \ref{prop:fq-insep-charact-rules}, 
\ref{prop:generalConsequencePreservation} and 
\ref{prop:cqPreservation} by the arguments already presented in the proofs for 
Theorems \ref{thm:datalogAtomicSubsumptionModules}, 
\ref{thm:disjDatalogModules} and \ref{thm:datalogCQModules}.

For $z=\mathsf{m}$, we will now show that $\T\setminus \M^{\chistar} \eqm \emptyset$.
Let $\calI$ be a model of $\emptyset$ and $\abox_{\calI}$ the ABox defined by $\calI$ over $\S$. 
Consider the datalog program $\prog = \bigcup_{\ax\in\T\setminus\M^{\chistar}}\Xi_{\chistar}(\ax),$
and the materialisation $\prog(\abox_{\calI})$ of $\prog$ w.r.t. $\abox_{\calI}$. 
We show that $\prog(\abox_{\calI})$ is a model of $\T\setminus\M$ that coincides with $\calI$ over $\S$. For this, it 
suffices to show the following two properties:
\begin{itemize}
\item All facts over $\S$ in $\prog(\abox_{\calI})$ must already be in $\abox_{\calI}$.\\
  Let $\fct\in\prog(\abox_{\calI})$ be a fact over $\S$. If
  $\fct\notin\abox_{\calI}$ then there must exist a proof $\rho$ of
  $\fct$ in $\prog\cup\abox_{\calI}$. Since $\abox_{\calI}$ only
  mentions predicates from $\S$ and $\prog\subseteq \prog^{\chistar}$,
  we can find a proof of $\fct*\in\arel^{\mathsf{m}}$ in
  $\prog^{\chistar}\cup\abox_0^{\mathsf{m}}$ that mentions the exact
  same rules as $\rho$.  Let $\rrule$ be a rule mentioned in $\rho$,
  there must exist $\srule\in\T\setminus\M^{\chistar}$ such that
  $\rrule\in\Xi^{\chistar}(\srule)$; however, because $\rrule$ is also
  mentioned in a proof of $\fct*$ in
  $\prog^{\chistar}\cup\abox_0^{\mathsf{m}}$, it must also be
  $\srule\in\M^{\chistar}$.  This is a contradiction, so
  $\fct\in\abox_{\calI}$.

\item $\bot\notin\prog(\abox_{\calI})$.\\
Suppose $\bot\in\prog(\abox_{\calI})$. Then there must be a proof $\rho$ of $\bot$ in 
$\prog\cup\abox_{\calI}$. Again, we can find a proof of $\bot$ in $\prog^{\chistar}\cup\abox_0^{\mathsf{m}}$ 
supported by the exact same rules as $\rho$. Following a similar argument as before, we 
conclude that $\bot\notin\prog(\abox_{\calI})$.\qedhere
\end{itemize}
\end{proof}

\presJustifications*
\begin{proof}
The claim follows from Propositions \ref{prop:fq-insep-charact-rules}, 
\ref{prop:generalConsequencePreservation} and \ref{prop:cqPreservation}
similarly to Theorems \ref{thm:datalogAtomicSubsumptionModules}, 
\ref{thm:disjDatalogModules} and \ref{thm:datalogCQModules}.
\end{proof}

\section{Module Containment}

\begin{definition}
\label{def:blaHomomorphism}
Let $\upchi = \langle \theta, \abox_0, \arel \rangle$ and $\upchi' = \langle \theta', \abox'_0, \arel' \rangle$ and let $\N$ and $\N'$ be the sets of constants mentioned in $\upchi$ and $\upchi'$, respectively.
A mapping $\mu:\N \to \N'$ is a \emph{homomorphism from $\upchi$ to $\upchi'$} if 
the following conditions hold:
\begin{inparaenum}[(i)]
\item $\theta' = \theta\mu$,
\item $\abox_0\mu \subseteq \abox'_0 $; and 
\item $\arel\mu \subseteq \arel'$. 
\end{inparaenum}
We write
$\upchi\hookrightarrow\upchi'$ if a homomorphism from $\upchi$ to $\upchi'$ exists.
\end{definition}

\begin{restatable}{theorem}{homorphismSubset} \label{thm:homomorphism-subset}
  If $\upchi,\upchi'$ are s.t. $\upchi\hookrightarrow\upchi'$, then
  $\M^{\chi} \subseteq \M^{\chi'}$.
\end{restatable}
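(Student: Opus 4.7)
The plan is to lift the homomorphism $\mu$ from constants to proof trees. The key observation is that the condition $\theta' = \theta\mu$ relates the two datalog programs $\prog^{\chi}$ and $\prog^{\chi'}$ in a very direct way: for each rule $r \in \T$ and each datalog rule $\rho = \varphi \to \fct\theta$ in $\Xi^{\chi}(r)$ (recall $\varphi$ is constant-free by the paper's assumption on $\T$), applying $\mu$ yields $\rho\mu = \varphi \to \fct\theta\mu = \varphi \to \fct\theta' \in \Xi^{\chi'}(r)$, and analogously the rule $\varphi \to \bot$ is fixed by $\mu$. Hence $\prog^{\chi}\mu \subseteq \prog^{\chi'}$, and moreover this containment respects the partition by axioms: $\Xi^{\chi}(r)\mu \subseteq \Xi^{\chi'}(r)$ for every $r \in \T$.

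Next, I would prove by induction on the depth of a forward-chaining proof that for every proof $\pi = (T, \lambda)$ of a fact $\fct$ in $\prog^{\chi} \cup \abox_0$, the pair $\pi\mu = (T, \lambda')$ obtained by setting $\lambda'(v) = \lambda(v)\mu$ on nodes and $\lambda'(v,w) = \lambda(v,w)\mu$ on edges is a proof of $\fct\mu$ in $\prog^{\chi'} \cup \abox_0'$. For the base case, any leaf label $\lambda(v) \in \abox_0$ becomes a label $\lambda(v)\mu \in \abox_0\mu \subseteq \abox_0'$. For the inductive step, one uses that the ``consequence'' relation of datalog is closed under constant substitution: if $\fct = \fcu\sigma$ is a consequence of $\bigwedge_i \fct_i' \to \fcu$ and ground facts $\fct_1, \ldots, \fct_n$ via an MGU $\sigma$ of the $\fct_i, \fct_i'$, then $\sigma\mu$ still unifies $\fct_i\mu$ with the corresponding $\fct_i'$ (which is constant-free modulo the $\theta$-image constants, on which $\mu$ acts), so $\fct\mu = \fcu\sigma\mu$ is a consequence of $(\bigwedge_i \fct_i' \to \fcu)\mu$ and $\fct_1\mu, \ldots, \fct_n\mu$.

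Combining these two ingredients finishes the proof. Suppose $r \in \M^{\chi}$. Then some $\rho \in \Xi^{\chi}(r)$ supports some $\fct \in \arel$ in $\prog^{\chi} \cup \abox_0$, so there is a proof $\pi$ of $\fct$ in $\prog^{\chi} \cup \abox_0$ mentioning $\rho$. The lifted proof $\pi\mu$ is a proof of $\fct\mu$ in $\prog^{\chi'} \cup \abox_0'$ mentioning $\rho\mu$. Since $\arel\mu \subseteq \arel'$, we have $\fct\mu \in \arel'$, and since $\rho\mu \in \Xi^{\chi'}(r)$, the rule $\rho\mu$ witnesses $r \in \M^{\chi'}$, as required.

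The main technical obstacle is verifying that the MGU-based consequence clause of the forward-chaining proof definition survives applying $\mu$, since ``most-general unifier'' is the one condition that could in principle be broken by substitution. However, because all facts in a forward-chaining derivation are ground, the MGU of $\fct_i$ and $\fct_i'$ is essentially a matching that binds the variables of $\fct_i'$ to the constants of $\fct_i$; post-composing with $\mu$ yields exactly the matching one needs for the $\mu$-images, so the property is preserved. Once this is checked, every other step is a direct unfolding of Definitions \ref{def:bla} and \ref{def:blaHomomorphism}.
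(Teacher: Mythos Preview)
Your proposal is correct and follows essentially the same approach as the paper: both arguments lift a forward-chaining proof in $\prog^{\chi}\cup\abox_0$ to one in $\prog^{\chi'}\cup\abox'_0$ by applying $\mu$, using induction on proof depth and the observation that $\sigma\mu$ serves as the required unifier on the $\mu$-images. Your decomposition is slightly cleaner in that you first isolate the program-level relationship $\Xi^{\chi}(r)\mu\subseteq\Xi^{\chi'}(r)$ before running the induction, whereas the paper interleaves this with the base case, but the substance of the argument is the same.
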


\begin{proof}
Suppose $\upchi = \langle \theta, \abox_0, \arel \rangle$ and $\upchi' = \langle\theta', \abox_0', \arel' \rangle$ 
with $\N$  and $\N'$ the sets of constants mentioned in $\upchi$  and $\upchi'$, respectively.
Let $\mu$ be a homomorphism from $\upchi$ to $\upchi'$ and let $\ax\in\M^{\chi}$. 
Some $\fct\in\arel$ 
must be supported  in $\prog^{\chi}\cup\abox_0$ by a rule in $\Xi^{\chi}(\ax)$. 
Since, by assumption, $\fct\mu\in\arel'$, it suffices for us to show that
$\fct\mu$ is supported in $\prog^{\chi'}\cup\abox'_0$ by a rule from $\Xi^{\chi'}(\ax)$.


To this end we will show that for any rule $\ax\in\T$ and any fact $\fct$ such that there exists a proof 
$\rho = (T,\lambda)$ of $\fct$ in $\prog^{\chi}\cup\abox_0$ mentioning $\srule\in\Xi^{\chi}(\ax)$, 
there exists a proof of $\fct\mu$ in $\prog^{\chi'}\cup\abox'_0$ mentioning $\Xi^{\chi'}(\ax)$.
We will reason by induction on the depth $d$ of $\rho$---which must be at least $1$ since by assumption it uses $\srule$.

\begin{itemize}
\item [$d=1$] \ \\
$\ax$ must be of the form $\bigwedge_{i=1^n}\fcu'_i(\x) \to \exists \y. [\bigvee_{j=1}^m \fmm_j(\x,\y)]$
so $\srule$ must be 
	\begin{itemize}
	\item $\bigwedge_{i=1^n}\fcu'_i \to \fct'\theta$ with $\fct'\in\fmm_j$ for some $1\leq j \leq m$ if $m>0$ \\
          The $\lambda$-images of the leaves of $T$ must be
          $\fcu_1,\dots,\fcu_n \in \abox_0$ such that there exists a
          MGU $\s$ of $\fcu_i, \fcu'_i$ for every $1\leq i \leq n$
          satisfying $\fct = \fct'\theta\s$.
	By assumption, we have $\fcu_i\mu\in\abox'_0$ for every $1\leq i \leq n$, and also
	$\srule' = \bigwedge_{i=1^n}\fcu'_i \to \fct'\theta' \in \Xi^{\chi'}(\ax)$ where $\theta'=\theta\mu$.
	Consider $\s' = \s\mu$. 
	It is easy to see that $\mu\s\mu = \s\mu$ since the domain of $\s$ is disjoint with both the domain and 
	the range of $\mu$. Therefore $(\fcu_i\mu)\s\mu = \fcu_i\s\mu$ for every $1\leq i \leq n$.
	Furthermore, since $\s$ is a MGU of $\fcu_i, \fcu'_i$ for every $1\leq i \leq n$, we have that 
	$\s'$ is a MGU of $\fcu_i\mu, \fcu'_i$ for every $1\leq i \leq n$.
	Finally, since $\theta' = \theta\mu$, we have that 
	$\fct\mu = \fct'\theta\s\mu = \fct'\theta\mu\s\mu = \fct'\theta'\s'$ is a consequence of
	$\srule'$ and $\fcu_1\mu, \dots, \fcu_n\mu$, and hence we have a proof of 
	$\fct\mu$ in $\prog^{\chi'}\cup\abox_0'$ 
	supported by $\srule'\in\Xi^{\chi'}(\ax)$.
	
      \item  $\bigwedge_{i=1^n}\fcu'_i \to \bot$ if $m=0$ 	\\
	Then it must be $\fct = \bot$ and, as in the previous case,
        the $\lambda$-images of the leaves of $T$ must be
        $\fcu_1,\dots,\fcu_n \in \abox_0$ such that there exists a MGU
        $\s$ of $\fcu_i, \fcu'_i$ for every $1\leq i \leq n$.
	Also, $\fcu_i\mu\in\abox'_0$ for every $1\leq i \leq n$, $\srule \in \Xi^{\chi'}(\ax)$, and 
	$\s' = \s\mu$ is a MGU of $\fcu_i\mu, \fcu'_i$ for every $1\leq i \leq n$, so we have a proof of 
	$\fct\mu = \bot$ in $\prog^{\chi'}\cup\abox_0'$ supported by $\srule\in\Xi^{\chi'}(\ax)$.
	\end{itemize}

      \item [$d>1$]\ \\
	Let $v$ be the root of $T$, let $\fcu_1,\dots,\fcu_n \in
        \abox_0$ be the $\lambda$-images of the children of $v$ and
        let $\ax'\in\T$ be such that the $\lambda$-image of the
        edges connecting $v$ with its children in $T$ is a rule in
        $\Xi^{\chi}(\ax')$.  Either $\srule\in\Xi^{\chi}(r')$ or it is
        mentioned in some subproof of $\rho$.  Our induction
        hypothesis implies that for each each $\ax''\in\T$, if some
        $\fcu_i$ is supported in $\prog^{\chi}\cup\abox_0$ by a rule
        in $\Xi^{\chi}(\ax'')$, then also $\fcu_i\mu$ is supported in
        $\prog^{\chi'}\cup\abox'_0$ by some rule in
        $\Xi^{\chi'}(\ax'')$.  Therefore, in either case, following an
        argument similar to case $d=1$, we can construct a proof
        $\rho'$ of $\fct\mu$ in $\prog^{\chi'}\cup\abox'_0$ from a
        collection of proofs of $\fcu_1\mu, \dots, \fcu_n\mu$ in
        $\prog^{\chi'}\cup\abox'_0$ and a rule in $\Xi^{\chi'}(\ax')$
        in such a way that $\rho'$ mentions a rule in
        $\Xi^{\chi'}(\ax)$.\qedhere
\end{itemize}
\end{proof}

\hierarchy*
\begin{proof}
This follows immediately from Theorem \ref{thm:homomorphism-subset}.
%
%
%
%
%
\end{proof}

 \section{Optimality}

\begin{definition}
  Let $\T$ be a TBox. Let $\S\subseteq \sig{\T}$ and $\S'=\S\setminus\set{\bot}$. For each
  existentially quantified variable $y$ in $\T$, let $c_{y}$ be a
  fresh constant. Let $\theta=\mset{y\mapsto
    c_{y}}{y\text{ existentially quantified in $\T$}}$. Furthermore, for each pair
  $\langle\A,\B\rangle\in\S'\times\sig{\T}$, let $\c_{\A,\B}$ be a vector of
  fresh constants of size $\arity(\A)$. We define
  $\Psiimp_0(\T,\S)=\langle\theta,\abox^{\mathsf{i}_0}_0,\arel^{\mathsf{i}_0}\rangle$
  where
  \begin{itemize}
  \item $\abox_0^{\mathsf{i}_0}=\mset{\A(\c_{\A,\B})}{\A,\B\in\S',\A\ne\B,\arity(\A)=\arity(\B)}\cup\mset{\A(\c_{\A,\bot})}{A\in\S'}$
  \item $\arel^{\mathsf{i}_0}=\mset{\B(\c_{\A,\B})}{\A,\B\in\S',\A\ne\B,\arity(\A)=\arity(\B)}\cup\set{\bot}$
  \end{itemize}
  We define $\Psicls_0(\T,\S)=\langle\theta,\abox_0^{\mathsf{c}_0},\arel^{\mathsf{c}_0}\rangle$ where
  \begin{itemize}
  \item $\abox^{\mathsf{c}_0}_0=\mset{\A(\c_{\A,\B})}{\A\in\S',\B\in\sig{\T}\setminus\set{\bot},\A\ne\B,\arity(\A)=\arity(\B)}\cup\mset{\A(\c_{\A,\bot})}{A\in\S'}$
  \item $\arel^{\mathsf{c}_0}=\mset{\B(\c_{\A,\B})}{\A\in\S',\B\in\sig{\T}\setminus\set{\bot},\A\ne\B,\arity(\A)=\arity(\B)}\cup\set{\bot}$
  \end{itemize}
  For each predicate $\B\in\S$ and each 
  $\mathbf{v}\in\set{1,\dots,\arity(\B)}^{\arity(\B)}$, let
  $*_{\B,\mathbf{v}}^1,\dots,*_{\B,\mathbf{v}}^{\arity(\B)+1}$ be fresh constants. We define
  $\Psidd_0(\T,\S)=\langle\theta,\abox_0^{\mathsf{f}_0},\arel^{\mathsf{f}_0}\rangle$
  where
  \begin{itemize}
  \item $\abox_0^{\mathsf{f}_0}=\mset{\A(\mathbf{d})}{\A\in\S',\B\in\S,\mathbf{v}\in\set{1,\dots,\arity(\B)}^{\arity(\B)},\mathbf{d}\in\set{*_{\B,\mathbf{v}}^1,\dots,*_{\B,\mathbf{v}}^{\arity(\B)+1}}^{\arity(\A)},\A(\mathbf{d})\ne\B(*_{\B,\mathbf{v}}^{\mathbf{v}})}$
  \item $\arel^{\mathsf{f}_0}=\mset{\B(*_{\B,\mathbf{v}}^{\mathbf{v}})}{\B\in\S,\mathbf{v}\in\set{1,\dots,\arity(\B)}^{\arity(\B)}}$\qedhere
  \end{itemize}
\end{definition}

\begin{proposition} \label{prop:canonical-module-settings} %
  Let $z\in\set{\mathsf{i},\mathsf{c}}$. Then,
  for every $\T$ and $\S$, $\M^{\Psi^{z}(\T,\S)}=\M^{\Psi^{z}_0(\T,\S)}$.
\end{proposition}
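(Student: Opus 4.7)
The crucial observation is that $\Psi^z(\T,\S)$ and $\Psi^z_0(\T,\S)$ share the same substitution $\theta$, so they induce the very same datalog program $\prog^\chi$ and the same map $\Xi^\chi$ sending each rule of $\T$ to its datalog translations; the only difference between the two modules therefore lies in the initial and relevant ABoxes. I plan to treat $z=\mathsf{i}$ in detail and remark that the case $z=\mathsf{c}$ follows identically after replacing $\S\setminus\set{\bot}$ by $\sig{\T}\setminus\set{\bot}$ as the range of target predicates.

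For $\M^{\Psi^z_0(\T,\S)}\subseteq\M^{\Psi^z(\T,\S)}$ I would invoke Theorem~\ref{thm:homomorphism-subset} using the homomorphism $\mu$ that fixes every $c_y$ and collapses each tuple $\c_{\A,\B}$ (and $\c_{\A,\bot}$) component-wise onto $(c_{\A}^1,\ldots,c_{\A}^{\arity(\A)})$. The three conditions $\theta\mu=\theta$, $\abox_0^{\mathsf{i}_0}\mu\subseteq\abox_0^{\mathsf{i}}$ and $\arel^{\mathsf{i}_0}\mu\subseteq\arel^{\mathsf{i}}$ are then immediate from the definitions, once one adopts the convention (implicit in Proposition~\ref{prop:generalConsequencePreservation}) that $\bot$ belongs to $\arel^{\mathsf{i}}$.

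The reverse inclusion is the delicate one: no single homomorphism from $\Psi^z(\T,\S)$ to $\Psi^z_0(\T,\S)$ can work, because the tuple $(c_{\A}^1,\ldots,c_{\A}^n)$ simultaneously labels the premise $\A(c_\A^1,\ldots)\in\abox_0^{\mathsf{i}}$ and the target $\B(c_\A^1,\ldots)\in\arel^{\mathsf{i}}$ for every $\B\ne\A$, and no image tuple in $\Psi^z_0(\T,\S)$ plays both of these roles. Instead, I would argue proof-by-proof: given $\ax\in\M^{\Psi^z(\T,\S)}$, fix a proof of some relevant fact $\fct\in\arel^{\mathsf{i}}$ whose support contains a rule in $\Xi^\chi(\ax)$ and apply to every node of the proof a substitution $\nu$ tailored to $\fct$. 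If $\fct=\B(c_\A^1,\ldots,c_\A^n)$, let $\nu(c_\A^j)=c_{\A,\B}^j$ and $\nu(c_\C^j)=c_{\C,\bot}^j$ for every other $\C\in\S\setminus\set{\bot}$; if $\fct=\bot$, let $\nu(c_\C^j)=c_{\C,\bot}^j$ uniformly; and keep the $c_y$'s fixed in both cases. Since datalog inferences are closed under constant substitutions, the relabelled tree is still a forward-chaining proof in $\prog^\chi\cup\abox_0^{\mathsf{i}_0}$, with leaves in $\abox_0^{\mathsf{i}_0}$, root in $\arel^{\mathsf{i}_0}$, and the rule labels on its edges unchanged, giving $\ax\in\M^{\Psi^z_0(\T,\S)}$.

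The main obstacle I anticipate is a bookkeeping one in this second inclusion: verifying that the relabelled tree is indeed a valid forward-chaining proof, that is, that the MGUs at each inference step survive $\nu$. This should follow from the fact that $\nu$ is injective on constants outside $\theta$'s image and assigns pairwise disjoint constant blocks to different predicates $\C$, so that no unintended identifications are introduced during unification.
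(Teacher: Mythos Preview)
Your proposal is correct and follows essentially the same approach as the paper: one inclusion via an explicit homomorphism $\Psi^z_0(\T,\S)\hookrightarrow\Psi^z(\T,\S)$ and Theorem~\ref{thm:homomorphism-subset}, the other by relabelling a supporting proof of a relevant fact with a substitution tailored to that fact. In fact your second inclusion is more explicit than the paper's, which simply asserts that any proof of $\bot$ (resp.\ $\B(\c_\A)$) in $\prog^{\mathsf{i}}\cup\abox_0^{\mathsf{i}}$ can be turned into a proof of $\bot$ (resp.\ $\B(\c_{\A,\B})$) in $\prog^{\mathsf{i}}\cup\abox_0^{\mathsf{i}_0}$ mentioning the same rules; your concern about MGUs is harmless overcaution, since applying a constant-to-constant substitution that fixes the $c_y$'s to a forward-chaining proof always yields another forward-chaining proof over the same rules.
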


\begin{proof}
It is easy to see that $\Psi^{z}_0(\T,\S) \hookrightarrow \Psi^{z}(\T,\S)$
for $z\in\set{\mathsf{i},\mathsf{c}}$. 
By Theorem \ref{thm:homomorphism-subset},
we thus have that $\M^{\Psi^{z}_0(\T,\S)} \subseteq \M^{\Psi^{z}(\T,\S)}$.

Before we continue, note that for each $z\in\set{\mathsf{i},\mathsf{c}}$ 
the datalog programs $\prog^{\Psi^{z}(\T,\S)}$ and $\prog^{\Psi^{z}_0(\T,\S)}$ coincide. For readability, 
we will denote this program with $\prog^{z}$.

Let $\rrule\in \M^{\Psi^{\mathsf{i}}(\T,\S)}$. 
Some fact $\fct\in\arel^{\chiimp}$ must be supported by a rule in 
$\Xi^{\Psi^{\mathsf{i}}(\T,\S)}(\rrule) = \Xi^{\Psi_0^{\mathsf{i}}(\T,\S)}(\rrule)$.
The fact $\fct$ must be either $\bot$ or $\B(\c_{\A})$ with $\A,\B\in\S'$.
It is easy to see how one can turn any proof of $\bot$ (resp. $\B(\c_{\A})$) in
$\prog^{\mathsf{i}}\cup\abox^{\mathsf{i}}_0$ into a proof of $\bot$ 
(resp. $\B(\c_{\A,\B})$) in $\prog^{\mathsf{i}}\cup\abox_0^{\mathsf{i}_0}$
that mentions the exact same rules.
By construction of $\Psi^{\mathsf{i}}$, both $\bot$ and $\B(\c_{\A,\B})$ are in $\arel^{\mathsf{i}_0}$,
so $\rrule\in \M^{\Psi^{\mathsf{i}}_0(\T,\S)}$. 
Therefore $\M^{\Psi^{\mathsf{i}}(\T,\S)} \subseteq \M^{\Psi^{\mathsf{i}}_0(\T,\S)}$.

The argument for $z=\mathsf{c}$ is analogous.
\end{proof}

\optimality*

\begin{proof}
  We show the claim for $\Psiimp$, the argument for
  $\Psicls$ is similar. Let $\S'=\S\setminus\set{\bot}$.

  For $\Psiimp$, suppose for contradiction there is a uniform,
  $\mathsf{i}$-admissible $\Psi$ and some $\T$ such that
  $\M^{\Psiimp(\T,\S)}\not\subseteq\M^{\Psi(\T,\S)}$. Then, by
  Proposition~\ref{prop:canonical-module-settings},
  $\M^{\Psiimp_0(\T,\S)}\not\subseteq\M^{\Psi(\T,\S)}$, and hence, by
  Theorem~\ref{thm:homomorphism-subset},
  $\Psiimp_0(\T,\S)\not\hookrightarrow\Psi(\T,\S)$. Let
  $\Psi(\T,\S)=\langle\theta',\abox'_0,\arel'\rangle$. Since
  $\Psiimp_0(\T,\S)\not\hookrightarrow\Psi(\T,\S)$, by construction of
  $\Psiimp_0$, there are two cases to consider:
  \begin{itemize}
  \item There are some $\A,\B\in\S'$ with $\arity(\A)=\arity(\B)$ such
    that for every vector $\c$ of size $\arity(\A)$ of constants
    mentioned in $\Psi$, $\A(\c)\notin\abox'_0$ or
    $\B(\c)\notin\arel'$. Let 
    \[\T'=\set{\A(\x)\to\B(\x)}\cup\mset{\to\exists y.Q_y(y)}{y\text{ existentially
    quantified in $\T$, $Q_y$ fresh for every $y$}}\] 
    Then $\Psi(\T',\S)=\Psi(\T,\S)$ (by uniformity), and
    hence $\M^{\Psi(\T',\S)}=\emptyset$. Since
    $\emptyset\not\models\A(\x)\to\B(\x)$, we have
    $\M^{\Psi(\T',\S)}\not\eqi\T'$.
  \item We have
    $\bot\notin\arel'$. Let
    \[\T'=\set{\A(\x)\to\bot}\cup\mset{\to\exists y.Q_y(y)}{y\text{
        existentially quantified in $\T$, $Q_y$ fresh for every $y$}}\]
    for some $\A\in\S'$. Then $\Psi(\T',\S)=\Psi(\T,\S)$ (by uniformity),
    and hence $\M^{\Psi(\T',\S)}=\emptyset$. Since
    $\emptyset\not\models\A(\x)\to\bot$, we have
    $\M^{\Psi(\T',\S)}\not\eqi\T'$.
  \end{itemize}
  In both cases, we obtain a contradiction to $\Psi$ being
  $\mathsf{i}$-admissible.
\end{proof}

\begin{proposition} \label{prop:psidd-zero-admissible}
  The family $\Psidd_0$ is $\mathsf{f}$-admissible.
\end{proposition}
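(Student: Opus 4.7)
The plan is to reduce via Proposition \ref{prop:fq-insep-charact-rules} to showing $\T \models r$ iff $\M^{\Psidd_0(\T,\S)} \models r$ for every datalog rule $r$ over $\S$. Tautological rules are preserved in both directions trivially; for the rest I would invoke Proposition \ref{prop:generalConsequencePreservation}, noting that $\bot \in \arel^{\mathsf{f}_0}$ holds by construction. It remains, for each non-tautological $r = \fml \to \fmm$ and each substitution $\s$ sending the variables of $r$ to pairwise distinct fresh constants, to build a mapping $\nu_\s$ with $\fml\s\nu_\s \subseteq \abox_0^{\mathsf{f}_0}$ and $\fmm\s\nu_\s \subseteq \arel^{\mathsf{f}_0}$.

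If $\fmm$ is empty, I would pick any $\B \in \S \setminus \{\bot\}$ with $n = \arity(\B)$ and any $\mathbf{v} \in \{1,\dots,n\}^n$ (assuming such a $\B$ exists; otherwise the claim is vacuous), and let $\nu_\s$ send every constant in the image of $\s$ to $*_{\B,\mathbf{v}}^{n+1}$. Any body atom $\A(\bar u)\s\nu_\s$ is then an $\A$-atom whose arguments are all $*_{\B,\mathbf{v}}^{n+1}$, which cannot equal $\B(*_{\B,\mathbf{v}}^{\mathbf{v}})$ since the indices in $\mathbf{v}$ all lie in $\{1,\dots,n\}$. Hence $\fml\s\nu_\s \subseteq \abox_0^{\mathsf{f}_0}$.

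If $\fmm = \B(t_1,\dots,t_n)$, I would choose $\mathbf{v} = (v_1,\dots,v_n) \in \{1,\dots,n\}^n$ to encode the equality pattern of $(t_1,\dots,t_n)$, e.g., by $v_i = \min\{j : t_j = t_i\}$, so that $v_i = v_j$ iff $t_i = t_j$. I would then define $\nu_\s$ by sending each head-variable image $t_i\s$ to $*_{\B,\mathbf{v}}^{v_i}$ (well-defined by the choice of $\mathbf{v}$) and each body-only-variable image to $*_{\B,\mathbf{v}}^{n+1}$. Then $\fmm\s\nu_\s = \B(*_{\B,\mathbf{v}}^{\mathbf{v}}) \in \arel^{\mathsf{f}_0}$. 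The critical verification is that no body atom $\A(\bar u)\s\nu_\s$ equals $\B(*_{\B,\mathbf{v}}^{\mathbf{v}})$: such a coincidence would force $\A = \B$ and each $u_i$ to be a head variable with $u_i = t_i$, i.e., $\A(\bar u) = \B(\bar t) = \fmm$ would already occur as a conjunct of $\fml$, contradicting non-tautology of $r$.

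Both cases satisfy the hypotheses of Proposition \ref{prop:generalConsequencePreservation}, yielding $\T \models r$ iff $\M^{\Psidd_0(\T,\S)} \models r$ and finishing the proof. The main obstacle is the design in the second case: the index set $\{1,\dots,n+1\}$ and the equality-pattern vector $\mathbf{v}$ must be calibrated so that the single excluded atom $\B(*_{\B,\mathbf{v}}^{\mathbf{v}})$ of $\abox_0^{\mathsf{f}_0}$ is hit by a body-atom image precisely when that atom is a syntactic copy of the head, which is exactly the tautology condition for datalog rules.
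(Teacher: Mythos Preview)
Your proposal is correct and follows the same route as the paper: reduce via Propositions~\ref{prop:fq-insep-charact-rules} and~\ref{prop:generalConsequencePreservation}, then build $\nu_\s$ by encoding the equality pattern of the head arguments as the vector $\mathbf{v}$, sending head-variable images to the matching $*_{\B,\mathbf{v}}^{v_i}$ and body-only images to $*_{\B,\mathbf{v}}^{n+1}$, so that the unique excluded fact $\B(*_{\B,\mathbf{v}}^{\mathbf{v}})$ is hit by a body atom only when the rule is tautological. The one cosmetic difference is that the paper does not split cases: it treats the empty head uniformly as the nullary atom $\bot$ (so $\B=\bot$, $n=0$), which sidesteps the small edge case in your first case where the auxiliary $\B\in\S\setminus\{\bot\}$ you choose is itself nullary and also occurs in the body.
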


\begin{proof}
  By Propositions~\ref{prop:fq-insep-charact-rules}
  and~\ref{prop:generalConsequencePreservation}, it suffices to show
  that, given a datalog rule $\rrule=\fml\to\fct$ and a substitution
  $\sigma$ mapping all variables in $\rrule$ to distinct constants, we
  can construct a mapping $\nu$ such that
  $\fml\sigma\nu\subseteq\abox_0^{\mathsf{f}_0}$ and
  $\fct\sigma\nu\in\arel^{\mathsf{f}_0}$. W.l.o.g. we can assume
  $\fct\notin\fml$ (otherwise $\rrule$ is a tautology and hence
  trivially entailed by $\M^{\Psidd_0(\T,\S)}$) and therefore
  $\fct\sigma\notin\fml\sigma$ by injectivity of $\sigma$.

  Let $\fct\sigma=\B(\c)$. We construct $\nu$ as follows. Let $\mu$ be
  an ordering of the constants in $\c$. We define $\nu$ such that
  $c\nu=*_{\B,\c\mu}^{c\mu}$ if $c\in\c$ and
  $c\nu=*_{\B,\c\mu}^{\arity(\B)+1}$ otherwise. Since
  $\c\mu\in\set{1,\dots,\arity(\B)}^{\arity(\B)}$ we have
  $\B(\c)\nu\in\arel^{\mathsf{f}_0}$. Moreover, every fact in
  $\fml\sigma$ is mapped by $\nu$ to a fact $\A(\mathbf{d})$ where
  $\A\in\S\setminus\set{\bot}$,
  $\mathbf{d}\in\set{*_{\B,\c}^1,\dots,*_{\B,\c}^{\arity(B)+1}}^{\arity(\A)}$,
  and $\A(\mathbf{d})\neq \B(\c)\nu$ since $\B(\c)\notin\fml\sigma$.
  Thus $\fml\sigma\nu\subseteq\arel^{\mathsf{f}_0}$, and the claim
  follows.
\end{proof}

\begin{proposition}
  The family $\Psidd$ is not $\mathsf{f}$-optimal.
\end{proposition}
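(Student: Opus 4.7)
The plan is to exhibit a uniform $\mathsf{f}$-admissible family whose module is strictly smaller than $\M^{\Psidd(\T,\S)}$ on some input. The obvious candidate is $\Psidd_0$, which is $\mathsf{f}$-admissible by Proposition~\ref{prop:psidd-zero-admissible}; uniformity follows by inspection, since the substitution, the ABox $\abox_0^{\mathsf{f}_0}$ and the relevant set $\arel^{\mathsf{f}_0}$ produced by $\Psidd_0$ depend only on $\S$ and on the set of existentially quantified variables of $\T$, not on the actual bodies and heads of its rules. It therefore remains to find a separating $(\T,\S)$.

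I would reuse the two-rule example already mentioned in the paragraph preceding the statement: take $\T = \{\A(x) \to \B(x),\ \B(x) \to \A(x)\}$ with $\S = \{\A\}$. For $\Psidd$ we have $\A(\ast) \in \abox_0^{\mathsf{f}} \cap \arel^{\mathsf{f}}$, and the round-trip derivation $\A(\ast) \rightsquigarrow \B(\ast) \rightsquigarrow \A(\ast)$ forces both datalog rules into the support, giving $\M^{\Psidd(\T,\S)} = \T$. For $\Psidd_0$, unfolding the definition with $\arity(\A) = 1$ leaves only the single index vector $\mathbf{v} = (1)$ and two fresh constants $*^1_{\A,(1)}$ and $*^2_{\A,(1)}$. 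Then $\arel^{\mathsf{f}_0} = \{\A(*^1_{\A,(1)})\}$, while the exclusion clause $\A(\mathbf{d}) \neq \A(*^1_{\A,(1)})$ in the definition of $\abox_0^{\mathsf{f}_0}$ forces the starting ABox to be the singleton $\{\A(*^2_{\A,(1)})\}$.

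The datalog program coincides with $\T$ itself and contains no constants in rule heads, so forward chaining from $\A(*^2_{\A,(1)})$ only ever derives facts over the constant $*^2_{\A,(1)}$; in particular, the relevant fact $\A(*^1_{\A,(1)})$ is never produced. Hence $\M^{\Psidd_0(\T,\S)} = \emptyset$, and since $\M^{\Psidd(\T,\S)} = \T \not\subseteq \emptyset = \M^{\Psidd_0(\T,\S)}$, we conclude that $\Psidd$ is not $\mathsf{f}$-optimal. The only step that requires any thought is the routine verification of uniformity of $\Psidd_0$; everything else is a direct calculation on a small concrete example, and the underlying intuition is exactly the one highlighted in the discussion before the statement, namely that $\Psidd$ cannot distinguish the tautological inference $\A(x) \to \A(x)$ from a genuine consequence.
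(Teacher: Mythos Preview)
Your approach is the same as the paper's: use $\Psidd_0$ as the witnessing uniform $\mathsf{f}$-admissible family and exhibit a separating $(\T,\S)$. The paper simply chooses the even smaller example $\T=\{\A(x)\to\A(x)\}$, $\S=\{\A\}$, but your two-rule example from Section~\ref{sec:module-containment} works equally well.

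One minor inaccuracy in your unfolding of $\Psidd_0$: since $\bot\in\S$ by convention, the definition also ranges over $\B=\bot$ (with $\mathbf{v}$ the empty tuple), so $\abox_0^{\mathsf{f}_0}$ additionally contains $\A(*^1_{\bot,()})$ and $\arel^{\mathsf{f}_0}$ additionally contains $\bot$. This does not affect your argument---forward chaining from the enlarged ABox still never reaches $\A(*^1_{\A,(1)})$ or $\bot$---but the ``singleton'' claim is not literally correct.
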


\begin{proof}
  By Proposition~\ref{prop:psidd-zero-admissible}, it suffices to show
  that $\M^{\Psidd(\T,\S)}\not\subseteq\M^{\Psidd_0(\T,\S)}$ for some
  $\T$ and $\S$. Let $\T=\set{\A(x)\to\A(x)}$ and $\S=\set{\A}$. Then
  $\M^{\Psidd(\T,\S)}=\T\not\subseteq\emptyset=\M^{\Psidd_0(\T,\S)}$.
\end{proof}


\end{document}